\pgfplotsset{compat=newest}
\newcommand{\cD}{\mathcal{D}}
\newcommand{\cL}{\mathcal{L}}
\newcommand{\cX}{\mathcal{X}}
\newcommand{\cY}{\mathcal{Y}}
\newcommand{\bP}{\mathbb{P}}
\newcommand{\hx}{\hat{x}}
\newcommand{\ga}{\alpha}
\newcommand{\ft}{f_\theta}
\newcommand{\vect}[1]{\boldsymbol{\mathbf{#1}}}
\newcommand{\norm}[1]{\lVert #1 \rVert}
\DeclareMathOperator*{\argmax}{arg\,max}
\DeclareMathOperator*{\argmin}{arg\,min}
\theoremstyle{definition}
\newtheorem{definition}{Definition}
\newtheorem{theorem}{Theorem}
\newtheorem{lemma}[theorem]{Lemma}
\crefname{section}{Sec.}{Secs.}
\Crefname{section}{Section}{Sections}
\Crefname{table}{Table}{Tables}
\crefname{table}{Tab.}{Tabs.}
\begin{document}

\title{Interpolated Joint Space Adversarial Training for Robust and Generalizable Defenses}
\author{Chun Pong Lau\textsuperscript{1}, Jiang Liu \textsuperscript{1}, Hossein Souri\textsuperscript{1}, Wei-An Lin\textsuperscript{2}, Soheil Feizi\textsuperscript{3}, Rama Chellappa\textsuperscript{1} \\
\textsuperscript{1}Johns Hopkins University, \textsuperscript{2}Adobe, \textsuperscript{3}University of Maryland, College Park \\
{\tt\small \{jliu214, clau13, hsouri1, rchella4\}@jhu.edu, walin@umd.edu, sfeizi@cs.umd.edu}}
\maketitle

\begin{abstract}
\vspace{-3mm}
Adversarial training (AT) is considered to be one of the most reliable defenses against adversarial attacks. However, models trained with AT sacrifice standard accuracy and do not generalize well to novel attacks. Recent works show generalization improvement with adversarial samples under novel threat models such as on-manifold threat model or neural perceptual threat model. However, the former requires exact manifold information while the latter requires algorithm relaxation. Motivated by these considerations, we exploit the underlying manifold information with Normalizing Flow, ensuring that exact manifold assumption holds. Moreover, we propose a novel threat model called Joint Space Threat Model (JSTM), which can serve as a special case of the neural perceptual threat model that does not require additional relaxation to craft the corresponding adversarial attacks. Under JSTM, we develop novel adversarial attacks and defenses. The mixup strategy improves the standard accuracy of neural networks but sacrifices robustness when combined with AT. To tackle this issue, we propose the Robust Mixup strategy in which we maximize the adversity of the interpolated images and gain robustness and prevent overfitting. Our experiments show that Interpolated Joint Space Adversarial Training (IJSAT) achieves good performance in standard accuracy, robustness, and generalization in CIFAR-10/100, OM-ImageNet, and CIFAR-10-C datasets. IJSAT is also flexible and can be used as a data augmentation method to improve standard accuracy and combine with many existing AT approaches to improve robustness.
  \vspace{-5mm}
\end{abstract}

\section{Introduction}
\setlength{\abovecaptionskip}{3pt}
\setlength{\belowcaptionskip}{3pt}
\setlength{\textfloatsep}{10pt}

 Although recent research has shown that neural networks are highly successful in various applications \cite{girshick2015fastrcnn,hinton2012acoustic,levine2020wasserstein,imagenet2012}, they are vulnerable to adversarial samples which are intentionally designed to fool the model \cite{goodfellow2015explaining,carlini2017towards,athalye2018obfuscated,fawzi2018,athalye2018synthesizing}. This poses a massive challenge in security-critical applications such as autonomous driving \cite{kurakin2016adversarial} and medical imaging \cite{antun2020instabilities}. 




\begin{figure}[t!]
    \centering
    \captionsetup[subfigure]{justification=centering, belowskip=0pt}
    \begin{subfigure}[t]{0.156\textwidth}
        \raisebox{-\height}{\includegraphics[width=\textwidth]{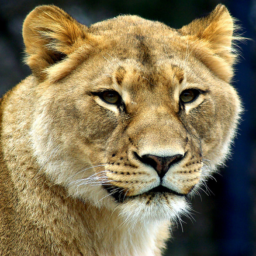}}
        \caption{Original}
    \end{subfigure}
    \hspace{-1.5mm}
    \begin{subfigure}[t]{0.078\textwidth}
        \raisebox{-\height}{\includegraphics[width=0.99\textwidth]{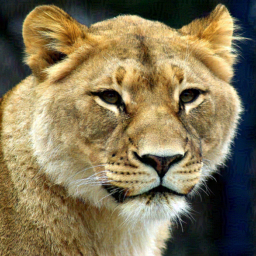}}\\
        \raisebox{-\height}{\includegraphics[width=0.99\textwidth]{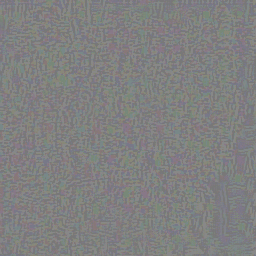}}
        \caption{PGD \cite{madry2017towards}}
    \end{subfigure}
    \hspace{-1.5mm}
    \begin{subfigure}[t]{0.078\textwidth}
        \raisebox{-\height}{\includegraphics[width=0.99\textwidth]{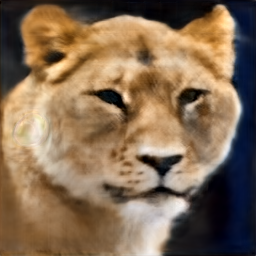}}\\
        \raisebox{-\height}{\includegraphics[width=0.99\textwidth]{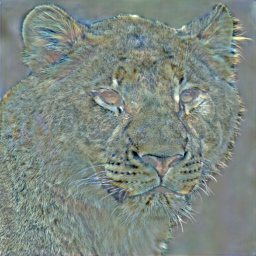}}
        \caption{OM-PGD (GAN) \cite{lin2020dual}}
    \end{subfigure}
    \hspace{-1.5mm}
    \begin{subfigure}[t]{0.078\textwidth}
        \raisebox{-\height}{\includegraphics[width=0.99\textwidth]{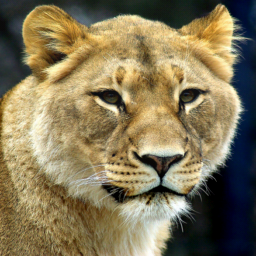}}\\
        \raisebox{-\height}{\includegraphics[width=0.99\textwidth]{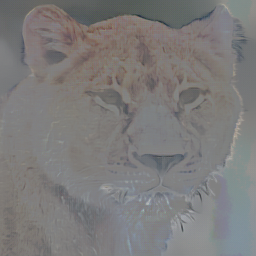}}
        \caption{OM-PGD (Flow)}
    \end{subfigure}
    \hspace{-1.5mm}
    \begin{subfigure}[t]{0.078\textwidth}
        \raisebox{-\height}{\includegraphics[width=0.99\textwidth]{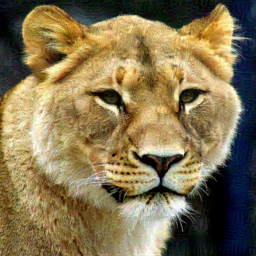}}\\
        \raisebox{-\height}{\includegraphics[width=0.99\textwidth]{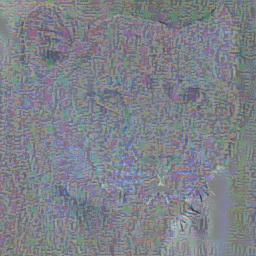}}
        \caption{JSA}
    \end{subfigure}
    
    \caption{Visualization of adversarial samples from AT \cite{madry2017towards}, OM-PGD (GAN) \cite{lin2020dual}, OM-PGD (Flow) and our proposed JSA.} \label{fig:overview}
    
\end{figure}



\begin{figure}[t!] 
\centering
\begin{tikzpicture}
\begin{groupplot}[
group style={group size=1 by 1},
width=0.5\textwidth,
ymin=0,ymax=60,
xmin=0,xmax=120,
xlabel={Epoch},
]

\nextgroupplot[title={Robust Accuracy against PGD Attack},
style={line width=0.5pt},
grid=both,
grid style={line width=.1pt, draw=gray!40},
ylabel=Accuracy,
y label style={at={(axis description cs:0,.5)},anchor=south}
]

\addplot[black, thick] table[x=Epoch, y=clean, col sep=comma,] {png/plot/adv_acc_new.csv}; \label{normal}

\addplot[red, thick] table[x=Epoch, y=AT, col sep=comma,] {png/plot/adv_acc_new.csv}; \label{AT}

\addplot[blue, thick] table[x=Epoch, y=jsa_v2, col sep=comma,] {png/plot/adv_acc_new.csv}; \label{jsa}

\addplot[green, thick] table[x=Epoch, y=imat_mixup_v2, col sep=comma,] {png/plot/adv_acc_new.csv}; \label{imat_mixup}

\addplot[brown, thick] table[x=Epoch, y=imat_v2_02, col sep=comma,] {png/plot/adv_acc_new.csv}; \label{imat}

\coordinate (top) at (rel axis cs:0,1);
\coordinate (bot) at (rel axis cs:1,0);
\end{groupplot}
    
legend
  \path (top|-current bounding box.south)--
        coordinate(legendpos)
        (bot|-current bounding box.south);
  \matrix[
      matrix of nodes,
      anchor=north,
      draw,
      inner sep=0.2em,
    ]at([yshift=20ex]legendpos)
    { \ref{normal}& Normal&[5pt]
      \ref{AT}& AT[PGD] &[5pt] \\
      \ref{jsa}& AT[JSA] &[5pt] 
      \ref{imat_mixup}& AT[JSA] + mixup &[5pt] \\
      \ref{imat}& IJSAT &[5pt] \\};
\end{tikzpicture}
\vspace{-2mm}
\caption{PGD robustness during training with CIFAR-10 dataset. Using JSA with AT, we can observe a significant improvement. Using normal mixup with it, we can see further improvement but also observe robust overfitting after the learning rate changes (at $75^{th}$ epoch). IJAST has better robustness and does not have robustness drop after learning rate changes.}
\label{fig:cifar10 plot}
\end{figure}
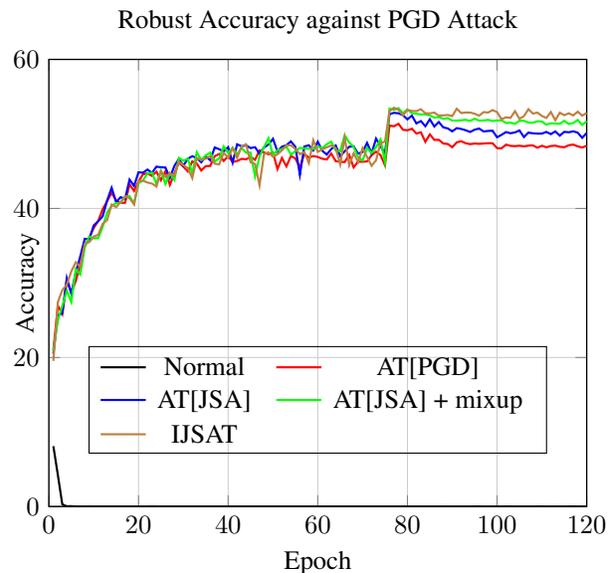

\begin{figure*}[t!]
  \centering
  
  \includegraphics[width=0.95\textwidth]{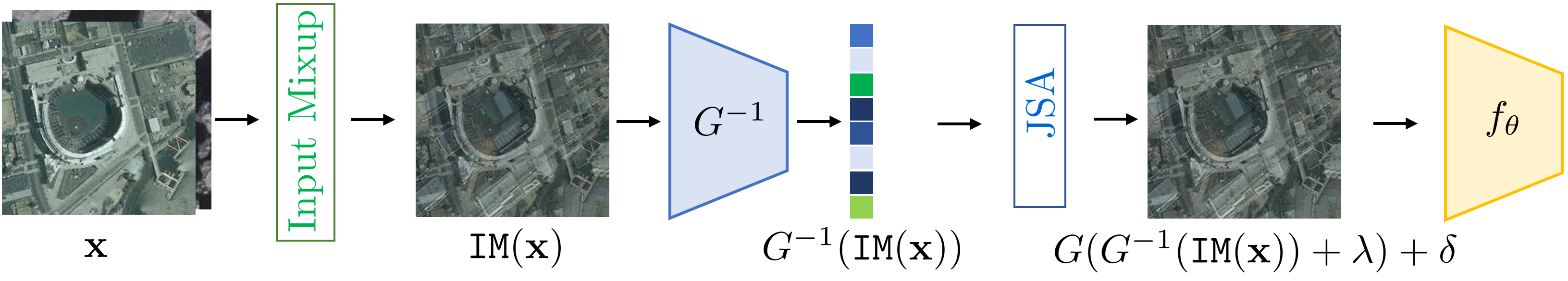}
  \caption{The overall pipeline of the proposed Interpolated Joint Space Adversarial Training (IJSAT). First, Input Mixup is applied to two images to get an interpolated image. Then it is passed to the Flow-based Model to get the latent vector. We apply the proposed Joint Space Attack to it and get the adversarial sample. Finally, we pass it to the classifier. During test time, images are directly passed to the adversarially trained classifier.} \label{fig:framework}
  \vspace{-7mm}
\end{figure*}
To tackle this problem, several defense methods have been proposed including empirical and certifiable defense methods \cite{xiao2019resisting,roth2019odds,pang2019mixup,samangouei2018defense,wong2018provable,raghunathan2018semidefinite,cohen2019certified,levine2019robustness,chiang2020certified,levine2020wasserstein, Stutz2020ICML}. In Adversarial Training, the defender generates adversarial samples that obey a certain threat model and utilizes them in the training process in order to get a robust model \cite{madry2017towards}. In other words, existing AT methods consider perturbed samples, for example, within a small $L_p$ norm-bound distance and assure robustness to the same type of perturbed samples. However, models trained using AT suffer from not being robust to novel imperceptible attacks \cite{laidlaw2021perceptual,laidlaw2019functional,song2018adv,poursaeed2019finegrained}. In addition, it has been observed that AT can often cause a reduction in standard accuracy, i.e., the accuracy on clean data, which indicates that a trade-off between robustness and accuracy \cite{madry2017towards,zhang2019theoretically,Stutz_2019_CVPR,raghunathan2020understanding} is at play. To further improve robustness, one usually trains models by AT with a larger attack budget. Although this improves robustness, \textit{the standard accuracy of the models is sacrificed}.


Recently, several works aim to use a low-dimensional underlying data manifold to attack neural networks by creating on-manifold adversarial samples \cite{jalal2017robust,song2018adv,Stutz_2019_CVPR, lin2020dual}. On-manifold adversarial samples are adversarial samples constrained to lie on data manifolds and are obtained by perturbing inputs in the latent space learned by generative models. Adversarial samples computed in the image space are considered as off-manifold \cite{Stutz_2019_CVPR}. On-manifold adversarial samples have been used to break models trained by AT \cite{song2018adv} as well as deep generative model-based defense methods \cite{Chen2020onbreaking} such as DefenseGAN \cite{samangouei2018defense}, Analysis by Synthetics \cite{schott2018towards} and MoG-VAE \cite{ghosh2019resisting}. In response to this attack, several manifold-based defense methods have been proposed lately \cite{jalal2017robust,Stutz_2019_CVPR, lin2020dual}. \cite{jalal2017robust} takes advantage of the low-dimensional data manifold and introduces a sup player, which is more powerful than regular adversarial training. \cite{Stutz_2019_CVPR} shows that robustness against on-manifold adversarial samples is related to the generalization ability of neural networks and proposes on-manifold adversarial training. \cite{lin2020dual} investigates the scenario that the manifold hypothesis holds (i.e. exact information about the underlying data manifold is given) and proposes dual manifold adversarial training (DMAT), which utilizes both on-manifold and off-manifold adversarial samples to robustify the models. The authors show that DMAT can improve model robustness against $L_p$ and non-$L_p$ attacks. However, \textit{DMAT only considers on-manifold images during both training and testing} while in practice, the testing images are natural (off-manifold) images. 

Interpolation-based training has been recently proposed in several works \cite{zhang2018mixup,pang2019mixup,verma2019manifold,lamb2019interpolated,lee2020adversarial}, which can improve the generalization and robustness properties of neural networks. \cite{zhang2018mixup} has introduced a data augmentation routine called \textit{input mixup} in which two data samples are drawn from the training dataset, and a linear interpolation of the samples is passed through the network with the loss computed using the same linear interpolation of losses. The \textit{manifold mixup} as a regularization tool uses the interpolation of the hidden representations of a randomly selected layer \cite{verma2019manifold}. \textit{Interpolated adversarial training} (IAT) \cite{lamb2019interpolated} proposes to employ either \textit{input mixup} or \textit{manifold mixup} combined with AT to benefit from both interpolated and AT methods and improves generalization while preserving robustness. However, IAT interpolates images after perturbation such that \textit{the interpolated images are not guaranteed to mislead the classifier}.

In this paper, we focus on developing a classifier with good standard accuracy, robust to seen attacks, and generalizing well to unseen attacks. To overcome the inability of $L_p$ adversarial samples to maintain standard accuracy and the on-manifold adversarial samples' ineffectiveness in defending the $L_p$ attacks, we propose a novel threat model called \textit{Joint Space Threat Model (JSTM)} that perturbs images in both image space and latent space (See Fig.~\ref{fig:overview}(e)). To ensure that on-manifold adversarial samples generalize well, which requires the exact manifold assumption, we use an \textit{invertible Flow-based model} to guarantee generalizability. (See Fig.~\ref{fig:overview}(d)). To prevent robust overfitting \cite{rice2020overfitting} and further improve robustness generalization, we propose a \textit{Robust Mixup strategy} in which we attack the interpolated samples directly to increase their adversity and hence achieve better robustness (See Fig.~\ref{fig:cifar10 plot}). In light of these, we propose \textbf{I}ntepolated \textbf{J}oint \textbf{S}pace \textbf{A}dversarial \textbf{T}raining (\textbf{IJSAT}), which applies Robust Mixup strategy and trains the model with JSA samples. The overall pipeline is shown in Fig.~\ref{fig:framework}.

\subsubsection*{Contributions:}
\begin{itemize}[leftmargin=3.5mm]
    \item We propose Joint Space Attack (JSA), which simultaneously optimizes the perturbations in image space and latent space. We empirically show JSA samples are robust to both $L_p$ attacks and unseen attacks. \vspace{-2mm}
    \item We propose the Robust Mixup strategy, which can further improve robustness and generalization and prevent overfitting.\vspace{-2mm}
    \item We empirically demonstrate that IJAST achieves good performance in both standard accuracy, robustness, and generalization in CIFAR-10, CIFAR-100, OM-ImageNet, and CIFAR-10-C datasets. \vspace{-2mm}
    \item IJAST can also serve as a data augmentation method to improve the standard accuracy of the classifiers and assist with other AT methods to achieve better robustness. 
\end{itemize}

\section{Mathematical Background}
\subsection{Setup}
We consider the multi-class classification problem, where the image samples $x\in \cX := \mathbb{R}^{H \times W \times C}$ are drawn from an underlying distribution $\mathbb{P}_X$, where $H$, $W$ and $C$ are the height, width and the number of channels of the image respectively. Let $f_\theta$ be a parameterized model which maps any image in $\cX$ to a discrete label $y$ in $\cY:= \{1, \cdots, |\cY| \}$. An \emph{accurate} classifier maps an image $x$ to its corresponding true label $y_{\text{true}}$, i.e. $f_\theta(x) = y_\text{true}$. A successful attack fools the classifier to map an adversarial image $\hx$ to a wrong label, i.e. $f_\theta(\hx) \neq y_\text{true}$. We define the manifold information to be \emph{exact}, when there exists a generative model $G$ such that there exists a latent representation $z$ for every image $x$. 
\begin{definition}[Exact Manifold Assumption]
    Fix a generator G. $$\forall x\in \cX,~\exists z\in Z~s.t.~G(z)=x.$$
\end{definition}
\subsection{Standard and On-Manifold Robustness}

We consider off-manifold adversarial samples $\hx^{img}$ and on-manifold adversarial samples $\hx^{lat}$ in the context of standard adversarial robustness and on-manifold adversarial robustness respectively. Both $\hx^{img}$ and $\hx^{lat}$ are visually indistinguishable from $x$. To create perturbations in image space $\cX$ and latent space $Z$, we consider the popular $L_p$ additive attacks where $\hx^{img} = x + \delta$ and $\hx^{lat} = G(z + \lambda)$ where $z \in Z $ is the corresponding latent vector. Formally, 
\begin{equation}
    \max_{\delta \in \Delta} \cL (f_{\theta}(x + \delta), y_{\text{true}}),
    \label{eq:regular}
\end{equation}
and 
\begin{equation}
    \max_{\lambda \in \Lambda} \cL (f_{\theta}(G(z + \lambda)), y_{\text{true}}),
    \label{eq:on-manifold}
\end{equation}
where $\Delta = \{ \delta: \norm{\delta}_p < \epsilon\}$, $\Lambda = \{ \lambda: \norm{\lambda}_p < \eta\}$ and $\cL$ is a classification loss function (e.g. the cross-entropy loss). In our work, we focus on $p = \infty$, and will explicitly specify when $p \neq \infty$ . In \eqref{eq:regular}, since the function is non-convex, the maximization is typically performed using gradient-based optimization methods. In this paper, we consider the PGD attack and use the notation PGD-$K$ to represent $K$-step PGD attacks with bounded $L_\infty$ norm. 

To defend against norm-bounded attacks, an established AT approach by Madry \etal~\cite{madry2017towards} considers the following min-max formulation:
\begin{equation}
    \min_{\theta} \sum_i \max_{\delta \in \Delta} \cL(f_\theta(x_i + \delta), y_{\text{true}}),
    \label{eq:adv_train_standard}
\end{equation}
where the classification model $f_{\theta}$ is trained exclusively on adversarial images by minimizing the cross-entropy loss. In a similar manner, we can defend the on-manifold adversarial samples which are crafted by OM-PGD attack by 

\begin{equation}
    \min_\theta \sum_i \max_{\lambda \in \Lambda} \cL (f_{\theta}(G(z_i + \lambda)), y_{\text{true}}).
    \label{eq:adv_train_manifold}
\end{equation}



\subsection{Flow-Based Generative Model}
Suppose $Z$ is a random variable with an explicit and tractable probability function (pdf) $p_Z : Z \rightarrow \mathbb{R}$. Let $G$ be an invertible function and $X = G(Z)$. By change of variables equation, the pdf of $X$:
\begin{equation} \label{eq: flow}\
\begin{split}
    p_X(x) &= p_Z(G^{-1}(x))~|\text{det D}G^{-1}(x)| \\
    &= p_Z(G^{-1}(x))~|\text{det D}G(G^{-1}(x))|^{-1},
\end{split}
\end{equation}

\noindent where $G^{-1}$ is the inverse of $G$, D$G(z) = \frac{\partial G}{\partial z}$ is the Jacobian of $G$ and D$G^{-1}(x) = \frac{\partial H}{\partial x}$ is the Jacobian of $G^{-1}$.

Although GANs largely dominate generative models, one major drawback of GANs is that they cannot compute the exact sample likelihoods. Flow-based generative models solve this problem by having an invertible function that has the properties mentioned above. The invertible function $G$ is typically modeled as the composition of $K$ invertible maps, i.e. $G = G_1 \circ G_2 \circ \cdots \circ G_K$. This is also called the \textit{normalizing flows} \cite{DBLP:journals/corr/DinhKB14}. Different designs for constructing $G$ have been proposed in recent years. NICE \cite{DBLP:journals/corr/DinhKB14} uses coupling layers to enable highly expressive transformation for flows. RealNVP \cite{DBLP:conf/iclr/DinhSB17} uses affine coupling layers which basically represent invertible scale transformations. Glow \cite{DBLP:conf/nips/KingmaD18} uses invertible $1 \times 1$ convolutions to have learnable permutations.

\subsection{Mixup Strategy}
Overfitting usually occurs when models are adversarially trained. This leads to the degradation of standard accuracy and generalization. To mitigate this drawback while preserving certain robustness, interpolation-based training techniques \cite{zhang2018mixup,pang2019mixup,verma2019manifold,lamb2019interpolated} are adopted, which are shown to be effective and have demonstrated promising performance in adversarial robustness and generalization improvements. In our work, we focus on using Input Mixup \cite{zhang2018mixup} to combine the off-manifold and on-manifold adversarial samples. Input Mixup draws two pairs of data sample from the dataset, $(x_i, y_i) $, $(x_j, y_j) \sim \bP_X$, and takes the convex combination between them in the image space $\mathtt{IM}(\vect{x}) = \ga x_i + (1 - \ga)x_j$, where $\mathtt{IM}$ is the Input Mixup function, $\vect{x} = (x_i, x_j)$ is the pair of images and $\ga \in (0, 1)$ is a random variable with Beta distribution. The interpolated image $\mathtt{IM}(\vect{x})$ is passed into the classifier $\ft$ with minimizing the convex combination of the cross-entropy loss,
\begin{equation}
    \resizebox{0.9\hsize}{!}{$\cL^{mix}_\ga(\mathtt{IM}(\vect{x})) = \ga \cL(\ft(\mathtt{IM}(\vect{x})), y_i) + (1 - \ga)\cL(\ft(\mathtt{IM}(\vect{x})), y_j).$} \label{eq:mixup criterion}
\end{equation}

\section{Joint Space Threat Model}

The ideal robust classifier should have the following properties: 1) Good standard accuracy; 2) Robust to seen attacks/known threat model; 3) Generalized to unseen attacks. AT provides satisfactory robustness to known threat models while sacrificing 1) and 3). To achieve 3), one can craft adversarial samples under a more comprehensive threat model. \cite{maini2020adversarial} considers the union of threat models by having average or maximum of the adversarial samples. However, it is not robust with adversarial attacks outside the union of threat model and computationally costly as it crafts each adversarial sample under the threat model within the union. \cite{lin2020dual} considers on-manifold adversarial samples, which improve standard accuracy and generalize to unseen attacks, but it requires exact manifold information. \cite{laidlaw2021perceptual} considers a comprehensive threat model called Neural Perceptual Threat Model (NPTM), which improves robustness to unseen attacks but sacrifices standard accuracy and requires additional relaxation in the attack algorithm. 

In light of these, we propose the Joint Space Threat Model (JSTM), which considers both image and latent space perturbations in one adversarial sample. To obtain a wider threat model, we consider the combination of image and latent space perturbations instead of considering the union of threat model and having a larger parameter space, i.e. $\Delta \cup \Lambda$. Mathematically, JSTM can be expressed as
\begin{equation}
    \max_{\delta \in \Delta, \lambda \in \Lambda} \cL (f_{\theta}(G(G^{-1}(x) + \lambda) + \delta), y_{\text{true}})
    \label{eq:double}
\end{equation}
\noindent where $G$ is Flow-based model. Since $G$ is invertible, exact manifold assumption holds. Moreover, JSTM is a special case of NPTM which does not require additional relaxations to craft adversarial attack.
\begin{lemma} \label{lemma}
    Assume image space perturbation $\delta=0$. Then JSTM is NPTM with the neural perceptual distance $\phi=G^{-1}$. 
\end{lemma}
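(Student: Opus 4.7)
The plan is to show that setting $\delta=0$ reduces the JSTM objective to a constraint written entirely in terms of $G^{-1}$, and that this is exactly the form of NPTM if we take the perceptual feature map to be $\phi=G^{-1}$. First, I would specialize equation~\eqref{eq:double} by plugging in $\delta=0$, which collapses it to
\[
\max_{\lambda\in\Lambda}\; \cL\!\bigl(f_\theta(G(G^{-1}(x)+\lambda)),\, y_{\text{true}}\bigr),
\]
with the latent-perturbation budget $\Lambda=\{\lambda:\|\lambda\|_p<\eta\}$ inherited from the JSTM definition.

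Next, I would reparameterize the maximization using the adversarial image itself. Writing $\hx=G(G^{-1}(x)+\lambda)$, the Exact Manifold Assumption plus the invertibility of the Flow-based generator $G$ guarantee that this correspondence is a bijection between the latent perturbations $\lambda$ and the admissible adversarial images $\hx$. Applying $G^{-1}$ to both sides gives $G^{-1}(\hx)=G^{-1}(x)+\lambda$, hence
\[
\lambda \;=\; G^{-1}(\hx)-G^{-1}(x) \;=\; \phi(\hx)-\phi(x)
\]
when we set $\phi:=G^{-1}$. Substituting into the budget constraint $\|\lambda\|_p<\eta$ yields $\|\phi(\hx)-\phi(x)\|_p<\eta$, which is precisely the neural perceptual distance ball centered at $x$ used by NPTM.

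Finally, I would conclude by rewriting the maximization in the $\hx$ variable,
\[
\max_{\hx:\,\|\phi(\hx)-\phi(x)\|_p<\eta}\; \cL\!\bigl(f_\theta(\hx),\, y_{\text{true}}\bigr),
\]
and observing that this is the NPTM formulation with feature map $\phi=G^{-1}$. The key point, and the only thing that really needs care, is the bijection step: it relies crucially on the Flow-based model being invertible (so $G^{-1}$ is a well-defined feature map) and on the Exact Manifold Assumption (so every image $x$ has a valid latent preimage, letting the reparameterization cover the whole NPTM ball rather than a proper subset). I do not anticipate any genuinely hard step; the main subtlety is to be explicit that $G$ being a normalizing flow, rather than, say, a GAN generator, is what licenses identifying $\phi$ with $G^{-1}$ globally.
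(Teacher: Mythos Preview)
Your proposal is correct and follows essentially the same argument as the paper: both exploit the invertibility of $G$ to set up a bijection between latent perturbations $\lambda$ and adversarial images $\hx$, then identify the constraint $\|\lambda\|<\eta$ with $\|G^{-1}(\hx)-G^{-1}(x)\|<\eta$. The only cosmetic difference is direction---the paper starts from the NPTM formulation and reduces it to the JSTM latent-space problem via two $\iff$ steps, whereas you start from JSTM with $\delta=0$ and reparameterize toward NPTM---but since the claim is an equivalence, this is the same proof.
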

\begin{proof}\renewcommand{\qedsymbol}{}
\setlength{\abovedisplayskip}{0pt}
\setlength{\belowdisplayskip}{0pt}
    \begin{align*}
        &\max_{\hx} \cL (f_{\theta}(\hx), y_{\text{true}})~s.t.~\Vert G^{-1}(\hx) - G^{-1}(x) \Vert < \eta\\
        \iff~&\max_{\lambda} \cL (f_{\theta}(G(z+\lambda)), y_{\text{true}})~ \\ 
        &~~~~~~~~~~s.t.~\Vert G^{-1}(G(z+\lambda)) - G^{-1}(G(z)) \Vert < \eta\\
        \iff~&\max_{\lambda \in \Lambda} \cL (f_{\theta}(G(z+\lambda)), y_{\text{true}})
    \end{align*}    
\end{proof}
\vspace{-3mm}
\noindent The second line holds as $G$ is invertible so for any $\hx$, there exists $\lambda$ such that $G(z+\lambda)=\hx$. 

\noindent To optimize \eqref{eq:double}, we propose the Joint Space Attack (JSA) algorithm, which uses the sign of gradients to update the $\lambda$ and $\delta$ similar to PGD attacks. Given an initial latent vector $z=G^{-1}(x)$, we have 
\begin{equation}
    \label{eq:full double algo}
    \begin{split}
    &\lambda_{k+1}= \eta_{iter} \cdot sign \left(\nabla_{\lambda_k} \cL(f_\theta (G(z + \lambda_k)+\delta_k), y_{\text{true}}) \right), \\ 
    &\delta_{k+1} = \epsilon_{iter} \cdot sign \left(\nabla_{\delta_k} \cL(f_\theta (G(z + \lambda_k)+\delta_k), y_{\text{true}}) \right), \\
    \end{split}
\end{equation}
\noindent where $\epsilon_{iter}$ and $\eta_{iter}$ are the attack step size at each iteration for image space and latent space, respectively. Also, there is a constraint on the JSA adversarial samples $\hx=\mathtt{clip}(G(z+\lambda)+\delta)$ within the image range, i.e. $\hx \in \cX$ and $\mathtt{clip}$ is the clip operator to clip the image within the range. Since we leverage the flow-based model, no additional relaxation is needed, while Perceptual Projected Gradient Descent (PPGD) under NPTM needs Taylor’s approximation.

\section{Optimized Perturbation with Interpolated Images} \label{sec: robust mixup}

To achieve better standard accuracy, one can use data augmentation. Input mixup is one of the popular data augmentation methods, and it is easy to combine with AT. Interpolated Adversarial Training (IAT), proposed by \cite{lamb2019interpolated}, combines AT and interpolation-based training to design a robust classifier. A mixture of clean perturbed images is used in IAT. Although IAT demonstrates certain adversarial robustness, it is not optimized from a mathematical perspective in that the interpolated images are not guaranteed to maximize the cross-entropy loss. Suppose $x_i, x_j$ are two images, $y_i, y_j$ are the corresponding labels and $\mathtt{IM}(\vect{x}) = \ga x_i + (1-\ga) x_j$ be the interpolated image. Let $\hx^{img}_i$ and $\hx^{img}_j$ be the perturbed image with perturbation $\delta_i$ and $\delta_j$ respectively. The interpolated perturbation need not be the optimized perturbation w.r.t the interpolated image $\mathtt{IM}(\vect{x})$ as,
\begin{equation} \label{eq: normal mixup not good}
\setlength{\abovedisplayskip}{3pt}
\setlength{\belowdisplayskip}{3pt}
    \begin{split}
        \mathtt{IM}(\vect{\hx^{img}}) &= \ga \hx^{img}_i + (1-\ga) \hx^{img}_j \\
        &= \mathtt{IM}(\vect{x}) + [\ga \delta_i + (1-\ga) \delta_j] \\
        &\neq \mathtt{IM}(\vect{x}) + \argmax_{\delta \in \Delta} \cL^{mix}_\ga(\mathtt{IM}(\vect{x}) + \delta),
    \end{split}
\end{equation}
\begin{algorithm}[t]
\begin{algorithmic}[1]
\Require Training set $\cD_{tr}$, classifier $\ft$, generative model $G$, parameter of Beta distribution $\tau$.
\State Initialize $\ft$
\For{epoch = 1, \dots, k}
    \State Sample $\{ x_i, y_i, z_i \}, \{ x_j, y_j, z_j \} \sim \cD_{tr}$ 
    \State Sample $\ga \sim Beta(-\tau,\tau)$
    \State $\mathtt{IM}(\vect{x})  = \ga x_i + (1 - \ga) x_j$ \Comment{Input Mixup in $X$}
    \State $\mathtt{IM}(\vect{y}) = \ga y_i + (1 - \ga) y_j$
    \State $\vect{z}^{mixup} = G^{-1}(\mathtt{IM}(\vect{x}))$
    \State $\hx  = \mathtt{JSA}(\vect{z}^{mixup}, \mathtt{IM}(\vect{y}))$ \Comment{Run JSA}
    \State $\cL^{mix}_\ga(\hx)=\ga \cL(\ft(\hx, y_i)) + (1 - \ga)\cL(\ft(\hx, y_j))$
    \State $g \gets \nabla_{\theta}\cL^{mix}_\ga$ \Comment{Gradient of the mixup loss}
    \State $\theta \gets Step(\theta, g)$ \Comment{\parbox[t]{.35\linewidth}{Update parameters using gradients $g$}}
\EndFor
\end{algorithmic}
\caption{Interpolated Joint Space Adversarial Training (IJSAT)}
\label{alg:model1}

\end{algorithm}



In order to maximize the loss to create a strong attack on interpolated images that can fool the classifier, the perturbation step should be the final step to ensure that the resulting adversarial samples are optimized to fool the classifier. Therefore, we consider the following \textit{Robust Mixup strategy}. Suppose we have the interpolated images in image space $\mathtt{IM}(\vect{x})$ and the interpolated latent vectors $\mathtt{IM}(\vect{z})$. Then we apply the PGD attack on $\mathtt{IM}(\vect{x})$ and DPGD on $\mathtt{IM}(\vect{z})$. In other words, we use \eqref{eq:mixup criterion} to iteratively maximize the loss, i.e. 
\begin{equation} \label{eq: delta mixup}
\setlength{\abovedisplayskip}{6pt}
\setlength{\belowdisplayskip}{6pt}
    \begin{split}
        \delta_{k+1} &= \hat{\epsilon} \cdot sign \left(\nabla_{\delta_{k}} \cL^{mix}_\ga(\mathtt{IM}(\vect{x})_k + \delta_k) \right) \\
        &= \hat{\epsilon} \cdot sign \Big[\nabla_{\delta_{k}} \big(\ga \cL(\ft(\mathtt{IM}(\vect{x})_k + \delta_k), y_i) \\
        &+ (1 - \ga)\cL(\ft(\mathtt{IM}(\vect{x})_k + \delta_k), y_j)\big)\Big].
    \end{split}
\end{equation}

\noindent The interpolated step followed by perturbation will ensure that the resultant images are optimized and generate a stronger attack.
\section{Interpolated Joint Space Adversarial Training}
Joint Space Attack can be used to harden a classifier against both seen and unseen attacks. The intuition, which we verify in Section \ref{sec: Experiment}, is that on-manifold adversarial samples improve the generalization of the classifier given exact manifold assumption holds according to \cite{lin2020dual}. JSA also lies within NPTM \cite{laidlaw2021perceptual} which has been shown to be a comprehensive threat model and provide good robustness to unseen attacks. On the other hand, the image space perturbation in JSA can help the classifier to defend $L_p$ attacks such as FGSM, PGD, and Auto Attacks. Combined with the proposed Robust Mixup strategy, we have the following {\it Interpolated Joint Space Adversarial Training (IJSAT)} framework:
\begin{equation} \label{eq: RIDMAT}
    \resizebox{0.9\hsize}{!}{\boxed{\begin{split}
        & \text{\bf IJSAT Optimization:} \\
        & \min_{\theta} \bigg\{ \max_{\delta \in \Delta, \lambda \in \Lambda} \cL^{mix}_\ga \Big( G \big(G^{-1}(\mathtt{IM}(x)) + \lambda \big) + \delta \Big) \bigg\}
    \end{split}}}
\end{equation}
\noindent The detailed algorithm is described in Algorithm Block \ref{alg:model1}. 
\section{Experiments} \label{sec: Experiment}

\subsection{Implementation Details}

\noindent \textbf{Dataset:} We evaluate the proposed method on the CIFAR-10, CIFAR-10-C \cite{hendrycks2019robustness}, CIFAR-100 and OM-ImageNet \cite{lin2020dual} datasets. In the OM-ImageNet dataset, all images are from ImageNet and projected by StyleGAN. In other words, all images are on-manifold and have the corresponding latent vectors. The CIFAR-100 results are shown in supplementary material. 



\noindent \textbf{Models:} For CIFAR-10 and CIFAR-100, we use ResNet-18. We follow \cite{zhang2019theoretically} to have batch size $128$ with 120 epochs. We use the SGD optimizer with setting initial learning rate to 0.1, momentum to 0.9 and weight decay to $2\times 10^{-4}$ .The learning rate drops by 0.1 at the 75-th, 90-th and 100-th epochs. We set $\epsilon=8/255$, $\epsilon_{iter}=2/255$ for image space attacks and $\eta=0.02$, $\eta_{iter}=0.005$ for latent space attacks with $10$ iteration steps. For OM-ImageNet, we follow \cite{lin2020dual} to use ResNet-50 \cite{he2016deep} and train the classifier with 20 epochs. We use the SGD optimizer with the cyclic learning rate scheduling strategy in~\cite{Wong2020Fast}, momentum $0.9$, and weight decay $5\times 10^{-4}$. We set $\epsilon=4/255$, $\epsilon_{iter}=1/255$ for image space attacks and $\eta=0.02$, $\eta_{iter}=0.005$ for latent space attacks with $5$ iteration steps. For input mixup, we use $\tau=0.1$ for the random scalar, i.e. $\ga \sim \text{Beta}(-\tau, \tau)$. For TRADES and MART, we use $\beta=6.0$ for the KL-divergence loss. Different training setting will be explicitly mentioned otherwise. For all models trained by the proposed method, we use GLOW \cite{DBLP:conf/nips/KingmaD18} as the generator.

\vspace{1mm}
\subsection{Evaluation Settings}
We evaluate our method in three aspects: {(I) Standard Accuracy}; {(II) Robustness} and {(III) Generalization}. 

\vspace{1mm}

\noindent \textbf{Standard Accuracy:} We compare our model with \textbf{Normal Training}, \textbf{VAE-GAN} \cite{Stutz_2019_CVPR}–on-manifold adversarial samples using VAE-GAN, \textbf{Cutout} \cite{devries2017cutout}-data augmentation with input masking, \textbf{Mixup} \cite{zhang2018mixup}-data augmentation with interpolated images, \textbf{Randomized-LA} and \textbf{Adversarial-LA} \cite{yuksel2021semantic}-on-manifold adversarial samples using GLOW in CIFAR-10 dataset.

\vspace{1mm}

\noindent \textbf{Robustness:} We compare our model with \textbf{$L_\infty$ AT} \cite{madry2017towards}, \textbf{DMAT} \cite{lin2020dual}-on-manifold adversarial training using StyleGAN, \textbf{IAT} \cite{lamb2019interpolated}-interpolated adversarial training using Input Mixup, \textbf{TRADES} \cite{zhang2019theoretically}-adversarial training using regularized surrogate loss to encourage smooth output and \textbf{MART} \cite{wang2019improving}-adversarial training that explicitly differentiates the misclassified and correctly classified examples in CIFAR-10, CIFAR-100 and OM-ImageNet dataset. For CIFAR-10 and CIFAR-100, we use PGD-20, Auto-Attack \cite{croce2020reliable}, which is an ensemble of four diverse attacks and the proposed JSA attack as the seen attacks. For unseen attacks, we use Elastic, JPEG and $L_2$ from \cite{kang2019robustness}. For OM-ImageNet, we follow \cite{lin2020dual} to have a total of eleven attacks. 

\vspace{1mm}

\noindent \textbf{Generalization:} We compare our model with \textbf{$L_\infty$ AT}, \textbf{$L_2$ AT}, \textbf{Fast PAT} \cite{laidlaw2021perceptual}-adversarial training using perceptual adversarial samples, \textbf{AdvProp} \cite{xie2020adversarial}-uses separate auxiliary batch norm for adversarial samples and \textbf{RLAT} \cite{kireev2021effectiveness}-relaxation of the LPIPS adversarial training in CIFAR-10-C dataset \cite{hendrycks2019robustness}, which is a CIFAR-10 dataset with a total of fifteen common image corruptions.

\begin{table}[t!]
\setlength\tabcolsep{3pt}
\caption{Classification accuracy against various attacks applied to CIFAR10 dataset. {Bold} values indicate the best performance.} \label{table:cifar10}
\centering
\small
\scalebox{0.80}{\begin{tabular}{@{}lr|rrr|rrr|r@{}}
\toprule
Method                &  Standard  &  $\text{PGD}^{20}$ & AA  & $\text{JSA}^{50}$  &  Elastic   &  JPEG &  $L_2$ & Avg\\
\midrule
Normal Training       & 94.69& 0.00 & 0.00 & 0.00 & 29.61 & 0.00 & 0.00 & 17.76\\
AT [PGD-5] \cite{madry2017towards}  & 84.15  & 49.85 & \underline{44.71} & 45.71& 45.16  & 26.71 & 18.75 & 45.01\\
DMAT \cite{lin2020dual}   & 82.77&	45.01&	34.08&	36.78&	56.72&	\textbf{38.42}&	\textbf{28.6}&	46.05\\
IAT \cite{lamb2019interpolated}  & \textbf{86.45}	&47.88&	41.27&	40.17&	59.09&	31.15&	22.97&	47.00 \\
TRADES \cite{zhang2019theoretically}   & 82.86&	\textbf{53.88}&	\textbf{48.87}&	\textbf{48.27}&	54.73&	32.55&	22.72 & 49.13\\
MART \cite{wang2019improving}  & 82.81&	53.25&	45.58&	46.73&	56.48&	28.4&	20.4 & 47.67\\
IJSAT \textbf{(ours)} & 83.16&	53.68&	47.58&	47.43&	\textbf{59.74}&	30.01&	23.18
 & \textbf{49.25} \\
\bottomrule
\end{tabular}}
\end{table}

\subsection{Main Results}

\noindent \textbf{CIFAR-10 Robustness:} We show the robustness results on CIFAR-10 in Table~\ref{table:cifar10}. First, by comparing the baseline AT method and our proposed method IJAST, we observe significant improvement in robustness against all six attacks. For DMAT, since it uses a GAN model to project the image to obtain the latent vectors, which cannot reconstruct the images exactly, robustness against off-manifold attacks drops as it does not generalize well without exact manifold assumption. Surprisingly, DMAT gains robustness to unseen attacks because of inexact image reconstruction (See Fig.~\ref{fig:overview}(c)). IAT has the best standard accuracy as it uses Input Mixup and trains with clean images. However, we observe robustness against off-manifold attacks drops as there is a trade-off between standard accuracy and robustness. The proposed IJSAT has a comparable robustness against PGD-20 attacks with TRADES and MART while TRADEs achieves the best performance in Auto Attack and IJSAT achieves the second best. Unlike TRADES, which uses a surrogate function, and MART, which uses the boosted cross-entropy function to gain robustness, IJSAT only uses adversarial samples crafted from the proposed JSTM to achieve similar robustness gain while achieving the best standard accuracy. TRADES and MART can be further improved with IJSAT as shown in Sec. \ref{sec: Applying JSA in Other Adversarial Training Methods}. Overall, IJSAT achieves the best average performance.



\vspace{1mm}

\begin{table}[t!]
\setlength\tabcolsep{5pt}
\caption{Left: Standard accuracy on CIFAR-10. Right: Common corruption accuracy on CIFAR-10-C. {Bold} values indicate the best performance in that section.} \label{table:cifar-10 standard and corruption}
\centering
\small
\scalebox{1.}{\begin{tabular}{@{}lr|lr@{}}
\toprule
Method                &  Standard   &  Method & Corrupted \\
\midrule
Normal Training &   95.2 &Normal Training  &   74.3 \\
VAE-GAN \cite{Stutz_2019_CVPR}   &   94.2 & $L_\infty$ AT    &   82.7 \\
Cutout \cite{devries2017cutout}  &   96.0&  $L_2$ AT   &   83.4 \\
Input Mixup \cite{zhang2018mixup}  &   95.9 & Fast PAT \cite{laidlaw2021perceptual}   &   82.4 \\
Randomized-LA \cite{yuksel2021semantic} & 96.3& AdvProp  \cite{xie2020adversarial}  &   82.9 \\
Adversarial-LA \cite{yuksel2021semantic}  &   96.6&  RLAT \cite{kireev2021effectiveness}  &   84.1 \\
IJSAT \textbf{(ours)}  &   \textbf{96.9}&  IJSAT \textbf{(ours)}   &   \textbf{84.6} \\
\bottomrule
\end{tabular}}
\end{table}

\begin{table*}[t!]
\caption{Classification accuracy against different attacks applied to OM-ImageNet test set. {Bold} values indicate the best performance in that section.} \label{table:OM_imagenet all}
\centering
\small
\scalebox{1.}{\begin{tabular}{@{}lr|rrr|r|rrrrrr@{}}
\toprule
Method                &  Standard  & FGSM  &  PGD-50 & MIA & OM-PGD-50 &  Fog     &  Snow    &  Elastic   &   Gabor   &  JPEG &  $L_2$ \\
\midrule
Normal Training       &   74.72&  2.59  &  0.00&  0.00&  0.26   &  0.03  &  0.06 &  1.20   &   0.03 &  0.00 & 1.7 \\
AT [PGD-5] \cite{madry2017towards}            &       73.31  &  48.02   &  38.88&   39.21  &  7.23  &  19.76 & 46.39 &  50.32  &  50.43 & 10.23 & 41.98\\
DMAT \cite{lin2020dual}    &    \textbf{77.96}   &  49.12   & 37.86 &   37.65    & 20.53  & 31.78& 51.19 &  56.09  &  51.61  & 14.31  & 51.36 \\
IAT \cite{lamb2019interpolated}  &       76.75  &  48.33   &  37.58 &   38.05    &  9.41  &  27.07 & 48.36 &  52.51  &  51.02 & 13.33 & 43.24\\
TRADES \cite{zhang2019theoretically}  &      72.34  &  53.29  &  47.76 &   47.84    &  10.04  &  26.86 & 51.13 & 55.57  &  55.79 & 10.28 & 46.75\\
MART \cite{wang2019improving}  &      72.86  &  52.28   &  45.43 &   45.62    &  8.61 &  25.34 & 51.00 &  54.40  &  54.27 & 8.95 & 44.63\\
IJSAT \textbf{(ours)} &  73.72&  \textbf{56.55} &  \textbf{50.85}&   \textbf{51.07} & \textbf{23.92}  &  \textbf{32.7} & \textbf{57.23}& \textbf{59.95}  & \textbf{59.82} & \textbf{22.47} & \textbf{56.40}  \\
\bottomrule
\end{tabular}}
\vspace{-4mm}
\end{table*}

\noindent \textbf{OM-ImageNet Robustness:} To test our method with high-resolution images, we evaluate our methods with OM-ImageNet. Since every image has the corresponding latent vectors in the OM-ImageNet dataset, we do not need to use the Flow-based model to compute the latent vectors to ensure exact manifold assumption. The results are in Table~\ref{table:OM_imagenet all}. For off-manifold attacks, TRADES, MART, and IJSAT generally achieve better performance than others. IJSAT performs the best as the JSA adversarial samples provide significant robustness to off-manifold attacks. For on-manifold robustness, since only DMAT and IJSAT consider on-manifold adversarial samples, they have significantly better on-manifold robustness than others. Among these two models, IJSAT has better on-manifold robustness than DMAT. For novel attacks, DMAT and IJSAT achieve generally better performance than the others. This is consistent with \cite{lin2020dual} that on-manifold adversarial samples improve generalization to novel attacks. Also, the mixup strategy improves generalization. From Table~\ref{table:OM_imagenet all}, IAT is consistently better than AT [PGD-5] in terms of robustness to novel attacks. Therefore, the proposed Robust Mixup strategy also boosts generalization performance, and hence IJSAT achieves the best robustness to novel attacks.

\vspace{1mm}

\noindent \textbf{Standard Accuracy:} We compare the proposed method to other data augmentation methods, and the results are shown in Table~\ref{table:cifar-10 standard and corruption}. Since image space perturbation decreases standard accuracy, we train a model by IJSAT with no image space perturbation, and other hyperparameters follow \cite{yuksel2021semantic}. This can be used as data augmentation. Note that although both Adversarial-LA and the proposed method use GLOW to craft on-manifold adversarial samples, Adversarial-LA uses $L_2$ norm in the latent space and does not have a mixup strategy. In contrast, the proposed IJSAT uses $L_\infty$ norm in the latent space with the proposed Robust Mixup strategy. Therefore, IJSAT achieves the best standard accuracy. 

\vspace{1mm}

\noindent \textbf{Generalization:} We use CIFAR-10-C to demonstrate the generalization power of IJSAT. To achieve a good performance in CIFAR-10-C, we train our model with image space budget $\epsilon=1/255$ and $\epsilon_{iter}=\epsilon/4$ and other hyperparameters follow \cite{kireev2021effectiveness}. We show the results in Table~\ref{table:cifar-10 standard and corruption}. Both Fast PAT and RLAT lie within NPTM and use perceptual adversarial samples to train the model, and hence they generalize well to corrupted images. From Lemma \ref{lemma}, we know that the latent space perturbation in JSA samples lies within NPTM. On the other hand, JSA samples also include image space perturbation, and hence IJSAT achieves the best result. 
\subsection{Ablation Studies}
Since IJSAT has multiple components, we demonstrate the improvements using the CIFAR-10 dataset, one component at a time. The results are shown in Table~\ref{table:cifar10 ablation}.

\vspace{-4mm}

\subsubsection{Importance of Having Exact Manifold Information}
To demonstrate the importance of having exact manifold information, we compare the on-manifold adversarial image crafted from GAN and flow-based models. We train a GAN model with CIFAR-10 and use it to craft JSA samples. Then we train a model with them, denoted as AT [$\text{JSA}^{10}$-GAN]. In other words, the only difference between AT [$\text{JSA}^{10}$-GAN] and AT [$\text{JSA}^{10}$] is that the former uses a GAN as the generator while the latter uses the Flow-based model. In Table~\ref{table:cifar10 ablation}, we observe a significant improvement with standard accuracy ($\sim 10\% \uparrow$), and robustness of $L_\infty$ attacks ($\sim 14\% \uparrow$) and unseen attacks ($\sim 9\% \uparrow$) for AT [$\text{JSA}^{10}$]. As $\text{JSA}^{10}$ samples have the exact manifold information, the on-manifold adversarial samples preserve the details of the images while $\text{JSA}^{10}$-GAN does not. As shown in Fig.~\ref{fig:overview}(c), we observe a large difference between the original image and the projected image. Even though the projected image has similar semantic details, the projected image makes the classifier hard to generalize well when natural images are evaluated.

\vspace{-4mm}

\subsubsection{JSA Adversarial Samples Improves Robustness Significantly}
To demonstrate the robustness gain whilst training with JSA samples, we train an AT model with JSA samples and compare them with standard AT, denoted as AT [$\text{PGD}^{10}$] and AT [$\text{JSA}^{10}$] respectively. From Table~\ref{table:cifar10 ablation}, we observe significant improvement with robustness of $L_\infty$ attacks ($\sim 3\% \uparrow$) and unseen attacks ($\sim 7\% \uparrow$) AT [$\text{JSA}^{10}$]. Since the only difference between these two models is the adversarial sample, this indicates that JSA samples provide more robustness to the trained model than $L_\infty$ samples.

\begin{table}[t!]
\setlength\tabcolsep{3pt}
\caption{Ablations studies of IJSAT. Classification accuracy against different attacks applied to CIFAR-10. {Bold} values indicate the best performance in that section.} \label{table:cifar10 ablation}
\centering
\small
\scalebox{0.8}{\begin{tabular}{@{}lr|rrr|rrr|r@{}}
\toprule
Method                &  Standard   &  $\text{PGD}^{20}$ & AA  & $\text{JSA}^{50}$   &  Elastic    &  JPEG &  $L_2$ & Avg\\
\midrule
Normal Training &   \textbf{{94.69}} &  0   &  0 &  0   &  29.61  &  0   &  0  &  17.76 \\
AT [$\text{PGD}^{10}$]             &       {84.15} &  49.85   &  44.71 &  45.71  &  45.16  &  26.76   &  18.75 &  45.01 \\
~~$\hookrightarrow \epsilon=16/255$    &       68.91	&52.78&	47.32& \textbf{47.95} &	47.09&	\textbf{32.97}&	\textbf{27.02}&	46.29
 \\
~~$\hookrightarrow \epsilon=32/255$    &       37.11&	31.67&	28.91& 30.14	&	21.67&	10.06&	11.06&	14.26\\
AT [$\text{JSA}^{10}$-GAN]  &       {76.43} &  47.04   &  43.10 &  43.82  &  39.35  &  18.63   &  {26.09} &  42.07 \\
AT [$\text{JSA}^{10}$]     &  {82.75} &  53.15   &  47.47 &  47.10   &  59.67  &  28.79   &  22.69  &  48.80 \\
~~+ mixup  &  {83.08} &  53.12   &  \textbf{47.59} &  47.04  &  59.65  &  28.56   &  {23.56}  &  48.94 \\
IJSAT &  {83.16} &  \textbf{53.68}   &  47.58 &  {47.43}  &  \textbf{59.74}  &  {30.01}   &  23.18  &  \textbf{49.25} \\
\bottomrule
\end{tabular}}
\end{table}

Since the JST model considers image and latent space perturbations, the resultant perturbations will have a larger attack budget. To investigate whether the success of IJAST is due to adversarial training with a large attack budget, we conduct experiments with models trained with larger attack budgets. We train an AT model with image space budget $\epsilon=16/255$ and $\epsilon=32/255$ and compare their robustness against both $L_\infty$ and unseen attacks. We compare them with the proposed AT [$\text{JSA}^{10}$], using $L_\infty=8/255$ as the attack budget in image space and $L_\infty=0.002$ in latent space. From Table~\ref{table:cifar10 ablation}, we observe that the standard accuracy drops when the models are trained with a larger $L_\infty$ budget. The AT [$\text{PGD}^{10}$, 16/255] has similar $L_\infty$ robustness with AT [$\text{JSA}^{10}$], which AT [$\text{JSA}^{10}$] is slightly better. However, this model has a $14\%$ drop in standard accuracy compared with AT [$\text{JSA}^{10}$]. It is even worse when we increase the budget to $32/255$. This experiment shows that the robustness gain of JSA samples does not merely rely on larger attack budgets.

\vspace{-4mm}
\subsubsection{Robust Mixup Further Boosts Standard Accuracy, Robustness and Generalization}
Mixup as a data augmentation method usually helps in classifier training. However, crafting the adversarial samples and then doing the mixup will negatively impact the robustness. We denote this model as AT [$\text{JSA}^{10}$]-mixup. From Table~\ref{table:cifar10 ablation}, using mixup in AT [$\text{JSA}^{10}$] cannot guarantee improving robustness. This is because the interpolated perturbations need not be the optimized perturbations to the interpolated images (See Eq. \eqref{eq: normal mixup not good}). When applying Robust Mixup (denoted as IJSAT), both standard accuracy and robustness are further improved. The perturbation step as the final step ensures that the adversarial sample maximizes the cross-entropy loss and fools the classifier. Training with these strong adversarial samples results in improving the robustness.

We plot the robust accuracy during training in Fig.~\ref{fig:cifar10 plot}. We can observe that IJSAT achieves the best robustness. Moreover, without mixup, the robustness of the model would decay after learning rate changes (at $75^{th}$ epoch). Input Mixup can help slightly (green line), while the proposed Robust Mixup can further improve robustness after learning rate changes (blue line), which reduces robust-overfitting.

\begin{figure}[t]
\setlength\tabcolsep{0pt}
\renewcommand{\arraystretch}{0.5}
\scalebox{0.95}{\begin{tabular}{ccccc}
\begin{subfigure}[t]{0.1\textwidth}
\includegraphics[width=\textwidth]{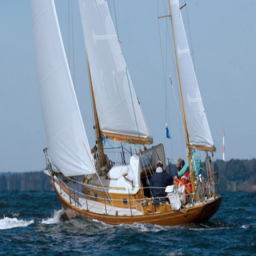}
\end{subfigure} &
\begin{subfigure}[t]{0.1\textwidth}
\includegraphics[width=\textwidth]{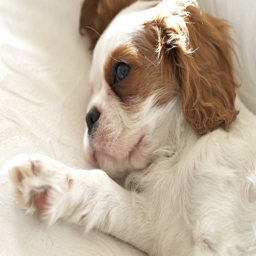}
\end{subfigure} &
\begin{subfigure}[t]{0.1\textwidth}
\includegraphics[width=\textwidth]{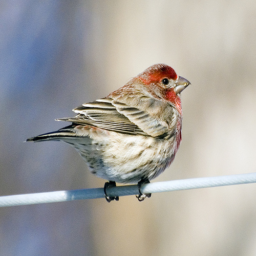}
\end{subfigure} &
\begin{subfigure}[t]{0.1\textwidth}
\includegraphics[width=\textwidth]{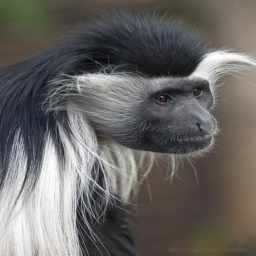}
\end{subfigure} &
\begin{subfigure}[t]{0.1\textwidth}
\includegraphics[width=\textwidth]{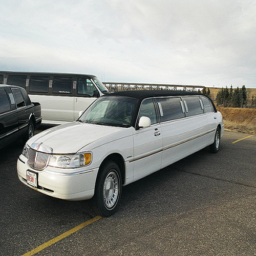}
\end{subfigure}
\\
\begin{subfigure}[t]{0.1\textwidth}
\includegraphics[width=\textwidth]{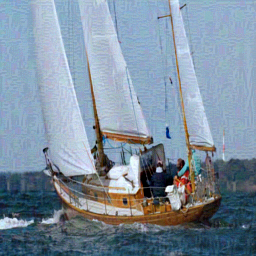}
\end{subfigure} &
\begin{subfigure}[t]{0.1\textwidth}
\includegraphics[width=\textwidth]{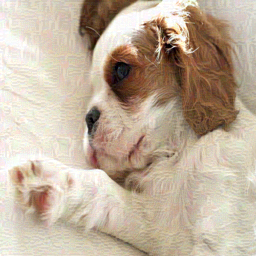}
\end{subfigure} &
\begin{subfigure}[t]{0.1\textwidth}
\includegraphics[width=\textwidth]{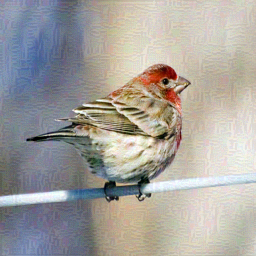}
\end{subfigure} &
\begin{subfigure}[t]{0.1\textwidth}
\includegraphics[width=\textwidth]{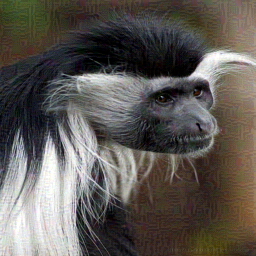}
\end{subfigure} &
\begin{subfigure}[t]{0.1\textwidth}
\includegraphics[width=\textwidth]{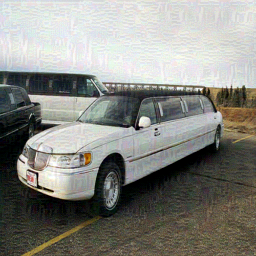}
\end{subfigure}
\\
\begin{subfigure}[t]{0.1\textwidth}
\includegraphics[width=\textwidth]{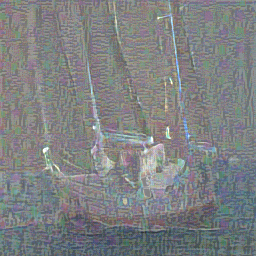}
\end{subfigure} &
\begin{subfigure}[t]{0.1\textwidth}
\includegraphics[width=\textwidth]{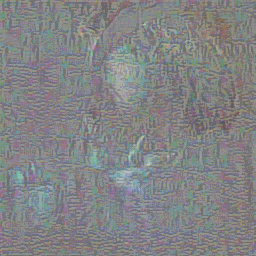}
\end{subfigure} &
\begin{subfigure}[t]{0.1\textwidth}
\includegraphics[width=\textwidth]{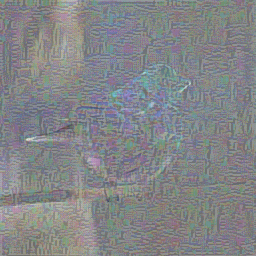}
\end{subfigure} &
\begin{subfigure}[t]{0.1\textwidth}
\includegraphics[width=\textwidth]{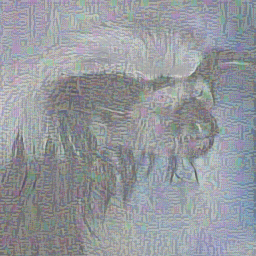}
\end{subfigure} &
\begin{subfigure}[t]{0.1\textwidth}
\includegraphics[width=\textwidth]{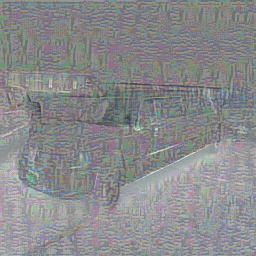}
\end{subfigure}
\end{tabular}}
\caption{Visualization of adversarial samples from JSA. Top: Original. Middle: JSA. Bottom: Magnified difference.}
\label{fig: JSA attack samples}
\vspace{-2mm}
\end{figure}
\subsection{Joint Space Attacks}
\begin{table}[t]
\setlength\tabcolsep{3pt}
\caption{Classification accuracy against different attacks applied to CIFAR-10 dataset.}
\label{table:cifar10 improvement}
\centering
\small
\scalebox{0.85}{\begin{tabular}{@{}lr|rr|rrr|r@{}}
\toprule
Method          &  Standard  &  $\text{PGD}^{20}$ &  AA       &  Elastic & JPEG  &   $L_2$   & Avg\\
\midrule
AT \cite{madry2017towards}  &    84.15   & 49.85  & 44.71      & 45.16 &  26.76   &  18.75  & 49.76 \\
AT \cite{madry2017towards} + JSA  &    {82.75} &  53.15   &  47.47   &  59.67  &  28.79   &  22.69  &  48.80 \\
Difference  &    $\downarrow$1.4   & $\uparrow$3.3  & $\uparrow$2.76   & $\uparrow$14.51 &  $\uparrow$2.03   &  $\uparrow$3.94  & $\uparrow$3.79 \\
\midrule
TRADES \cite{zhang2019theoretically}  &    82.86   & 53.88  & 48.87      & 54.73 &  32.55   &  23.72  & 53.79 \\
TRADES \cite{zhang2019theoretically} + JSA  &    82.70   & 54.22  & 49.10      & 58.66 &  32.55   &  25.18  & 54.68 \\
Difference  &   $\downarrow$0.16    &  $\uparrow$0.34 &  $\uparrow$0.23      & $\uparrow$3.93 &  0   &    $\uparrow$1.46&$\uparrow$0.89  \\
\midrule
MART \cite{wang2019improving}  &    82.81   & 53.25  & 45.58     & 56.48 &  28.4   &  20.47  & 52.36 \\
MART \cite{wang2019improving} + JSA  &    81.33   & 54.28  & 46.53      & 59.15 &  30.3   &  24.28  & 53.64 \\
Difference  &   $\downarrow$1.48    & $\uparrow$1.03  &   $\uparrow$0.95    &  $\uparrow$2.67&    $\uparrow$1.9 &$\uparrow$3.81    &$\uparrow$1.28  \\
\bottomrule
\end{tabular}}
\end{table}

In Table~\ref{table:cifar10 ablation}, we show the accuracy of different models against JSA. JSA is a strong attack, and it achieves similar results as AA. TRADES achieves the best while the proposed IJAST is the second best. Visualization results of JSA samples are shown in Fig.~\ref{fig:overview}. For the PGD attack, the perturbations are noise covering the entire image. For OM-PGD (GAN), the projection step in GAN leads to a significant difference between the adversarial image and the original image. For OM-PGD (Flow), since the Flow-based model crafts the perturbation, it preserves the semantic information but cannot provide robustness to $L_p$ attacks. For JSA, we observe that both image and latent space perturbations provide robustness to $L_p$ and unseen attacks. We empirically show the JSA perturbation would not change the semantic meaning of the images (See Fig.~\ref{fig: JSA attack samples}).

\begin{figure}[t!]
    \centering
    \captionsetup[subfigure]{justification=centering}
    \begin{subfigure}[t]{0.16\textwidth}
        \raisebox{-\height}{\includegraphics[width=0.48\textwidth]{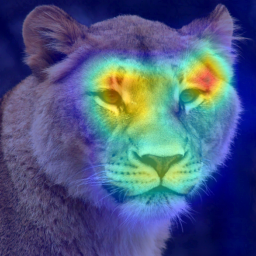}} \hspace{-2mm}
        \raisebox{-\height}{\includegraphics[width=0.48\textwidth]{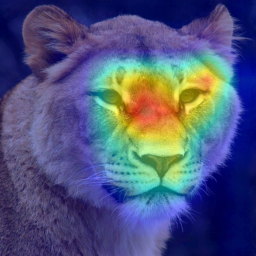}} 
        \caption*{AT}
    \end{subfigure}
    \hspace{-2mm}
    \hfill
    \begin{subfigure}[t]{0.16\textwidth}
        \raisebox{-\height}{\includegraphics[width=0.48\textwidth]{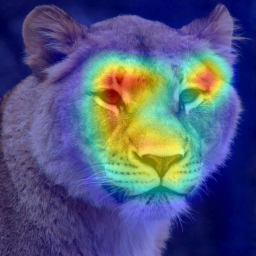}} \hspace{-2mm}
        \raisebox{-\height}{\includegraphics[width=0.48\textwidth]{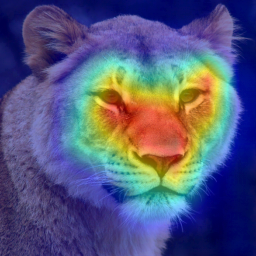}} 
        \caption*{TRADES}
    \end{subfigure}
    \hspace{-2mm}
    \hfill
    \begin{subfigure}[t]{0.16\textwidth}
        \raisebox{-\height}{\includegraphics[width=0.48\textwidth]{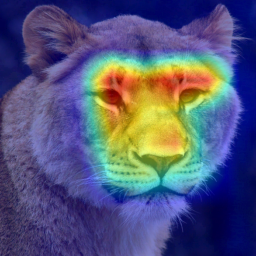}} \hspace{-2mm}
        \raisebox{-\height}{\includegraphics[width=0.48\textwidth]{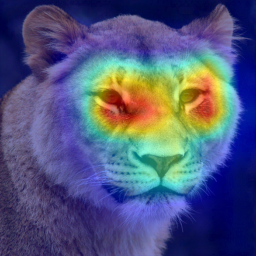}} 
        \caption*{MART}
    \end{subfigure}
    \caption{Visualization of CAM for different methods with and without JSA against PGD attack. {Left}: Original AT method. {Right}: AT method with JSA.} \label{fig:cam}
    \vspace{-2mm}
\end{figure}

\subsection{Applying JSA in Other Adversarial Training Methods} \label{sec: Applying JSA in Other Adversarial Training Methods}
JSA also can be combined with existing AT methods. The results for CIFAR-10 dataset are shown in Table~\ref{table:cifar10 improvement}. For AT, TRADES, and MART, the robustness for $L_\infty$, $L_2$, on-manifold, and non-$L_p$ attacks is improved after applying JSA. This demonstrates the flexibility of applying JSA adversarial to existing adversarial training methods and enhance robustness. To understand how JSA improves robustness, we can use a class activation map (CAM) to obtain a good visual explanation. In this paper, we use GRAD-CAM \cite{selvaraju2017grad} to visualize. The CAM results are shown in Fig.~\ref{fig:cam}. For the existing adversarial training method, the heatmap has a higher confidence score and overlaps more with the semantic meaningful region (the face of the leopard) when JSA is used. 
\section{Conclusion and Discussion}
In this paper, we propose a novel threat model, JSTM, which considers both image and latent space perturbations. Under this threat model, we propose JSA and use it to train the classifier. This overcomes the drawbacks of AT not being able to generalize well to unseen attacks and lack of robustness of on-manifold adversarial training to $L_p$ attacks. To ensure JSA samples generalize well to real datasets, we exploit the invertibility of the Flow-based model to make the exact manifold assumption holds. To further improve robustness and prevent overfitting, we propose the Robust Mixup strategy. Our extensive experiments show that IJSAT strikes a good balance among standard accuracy, robustness to seen attacks, and generalization to unseen attacks. Moreover, we demonstrate the flexibility of IJSAT that it can serve as a data augmentation method to improve standard accuracy and assist existing AT methods achieve better performance. 

Future research direction can address the common drawback of on-manifold adversarial training, including DMAT, Adversarial-LA, and the proposed IJSAT. All of them require an additional generator to capture the data manifold and craft adversarial samples. One possible way to drop the generator is to use the feature map of the classifier as the latent space and generate the on-manifold adversarial samples, using the Invertible ResNet \cite{pmlr-v97-behrmann19a}.   


\subsection*{Acknowledgement}
This work was supported by the DARPA GARD Program HR001119S0026-GARD-FP-052.

{\small
\bibliographystyle{ieee_fullname}
\bibliography{ref}
}

\appendix
\section{Implementation Details}
\subsection{Normalizing Flows} \label{sec:normalizing flows}
To craft on-manifold adversarial samples by our proposed method Joint Space Attack (JSA), we need to first train a flow-based model. We use Glow \cite{DBLP:conf/nips/KingmaD18} in this work. Different from NICE \cite{DBLP:journals/corr/DinhKB14} and RealNVP \cite{DBLP:journals/corr/DinhKB14}, Glow consists of a series of steps of flow, combined in a multi-scale architecture. The multi-scale architecture can reduce the computational complexity of the flow-based model. For each step of the flow, it consists of actnorm, followed by an invertible $1 \times 1$ convolution layer and followed by a coupling layer. LU decomposition is used to calculate the weight matrix of the invertible $ 1 \times 1$ convolution. The number of levels of the Glow we trained is $4$ and we have $8$ steps of flow for each flow. For both CIFAR-10 \cite{krizhevsky2009learning} and OM-ImageNet \cite{lin2020dual} datasets, we train the flow-based model with the original image resolution. 
\subsection{DMAT with CIFAR-10}
DMAT \cite{lin2020dual} requires to use on-manifold images to train the models. They construct an OM-ImageNet dataset and conduct experiments on it. However, if we want to use DMAT in CIFAR-10 dataset, we need to construct an OM-CIFAR10 dataset. Since we would like to have a low-dimensional latent representation, instead of using StyleGAN which has a high dimensional latent representation for Om-ImageNet dataset, we use Spectral Normalization for Generative Adversarial Networks (SNGAN) \cite{miyato2018spectral} as the generator since images only have $32 \times 32 \times 3$ dimensions in CIFAR10 dataset. With the trained SNGAN, we project the training set $\cD_{tr}^o= \{x_i, y_i\}_{i=1}^N$ onto the learned manifold by solving:
\begin{equation}
     z_i = \argmin_{z}~\text{LPIPS}(G(z), x_i) + \norm{G(z) - x_i}_1.
    \label{eq:invert}
\end{equation} 
\noindent where LPIPS is Learned Perceptual Image Patch Similarity ~\cite{zhang2018unreasonable}. The training set contains the projected image with the corresponding latent vectors and labels and the test set contains natural images with corresponding labels. In other words, all the test images are natural (off-manifold) images. All images are normalized into [0, 1], and data augmentations are used during training including random horizontal flipping and 32$\times$32 random cropping with 4-pixel padding. Sample images of the OM-CIFAR10 dataset are shown in Fig. \ref{fig:sample OM-cifar10}. 

In Table \ref{table:dmat comparison cifar10}, we observe a significant improvement with standard accuracy ($3.72\% \uparrow$), and robustness of $L_\infty$ attacks ($7\% \uparrow$) for DMAT+JSA. Since DMAT+JSA has the exact manifold information, the on-manifold adversarial samples preserve the details of the images while DMAT does not. The $L_2$ and JPEG robustness of DMAT is much better than that of DMAT-JSA as there is a large difference between the original image and the projected image.

\begin{table}[t!]
\setlength\tabcolsep{3pt}
\caption{Performance comparison between DMAT trained with approximated manifold (GAN) and exact manifold (Flow) evaluated with the CIFAR-10 dataset.}
\label{table:dmat comparison cifar10}
\centering
\small
\scalebox{0.8}{\begin{tabular}{{@{}lr|rrr|rrr|r@{}}}
\toprule
Method          &  Standard  &  $\text{PGD}^{20}$ & AA  & $\text{JSA}^{50}$&  Elastic   &  JPEG &  $L_2$ & Avg\\
\midrule
DMAT \cite{lin2020dual}  &   82.77&	45.01&	34.08&	36.78&	56.72&	38.42&	28.6&	46.05\\
DMAT \cite{lin2020dual} + JSA  &    86.49   & 49.58  & 43.37      &  42.09  & 59.71 &  26.75   &  18.80  & 46.68 \\
Difference  &    $\uparrow$3.72   & $\uparrow$4.57  & $\uparrow$9.29   &  $\uparrow$5.31  & $\uparrow$2.99 &  $\downarrow$11.67   &  $\downarrow$9.8  & $\uparrow$0.63 \\
\bottomrule
\end{tabular}}
\end{table}

\begin{table}[t!]
\setlength\tabcolsep{3pt}
\caption{Classification accuracy against various attacks applied to CIFAR100 dataset. {Bold} values indicate the best performance.} \label{table:cifar100}
\centering
\small
\scalebox{0.85}{\begin{tabular}{@{}lr|rr|rrr|r@{}}
\toprule
Method                &  Standard  &  $\text{PGD}^{20}$ & AA   &  Elastic   &  JPEG &  $L_2$ & Avg\\
\midrule
Normal Training       & 75.73&	0.01&	0&	9.41&	0&	0.43&	14.263\\
AT [PGD-5] \cite{madry2017towards}  & 57.72&	25.62&	22.08&	28.37&	15.37&	10.45&	26.60\\
DMAT \cite{lin2020dual}   & 59.99&	22.49	&19.34&	28.25&	12.56&	8.71&	25.22\\
IAT \cite{lamb2019interpolated}  & \textbf{61.86} &	23.55&	19.69&	31.9&	13.6&	9.93&	26.755 \\
TRADES \cite{zhang2019theoretically}   & 56.46&	28.45&	23.3&	26.46	&16.92&	13.01&	27.43\\
MART \cite{wang2019improving}  & 52.81&	29.49&	\textbf{24.09}&	27.71&	16.08&	14.16&	27.39\\
IJSAT \textbf{(ours)} & 55.83&	\textbf{29.84}&	23.97&	\textbf{33.47} & \textbf{17.28}& \textbf{14.95}& \textbf{29.22}\\
\bottomrule
\end{tabular}}
\end{table}

\begin{figure*}[t!]
  \centering
  
  \includegraphics[width=0.95\textwidth]{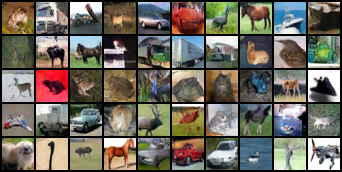}
  \caption{Sample images of OM-CIFAR10 dataset.} \label{fig:sample OM-cifar10}
\end{figure*}

\section{CIFAR-100 Robustness} 
To test our method on a dataset with more classes, we evaluate our methods on CIFAR-100. The results are in Table~\ref{table:cifar100}. We observe similar results as in CIFAR-10: 1) IJSAT has the best overall results; 2) MART, TRADES, and IJSAT achieve comparable $L_\infty$ robustness; 3) IJSAT achieves the best unseen attack robustness. 

\section{Clean Accuracy and PGD-50 Robustness for Multiple Snapshots during Training}
We evaluate the trained models using the PGD-50 for multiple snapshots during training. The results are presented in Figure \ref{fig:plot}. We observe that (i) standard adversarial training leads to degraded standard accuracy (consistent with \cite{tsipras2018robustness}), (ii) on-manifold adversarial samples (methods combined with JSA) improves standard accuracy (consistent with \cite{lin2020dual}), (iii) robustness to PGD attack improves when JSA is added during training and (iv) the improvement is the most significant when DMAT is combined with JSA, which justifies the importance of exact manifold. 


\begin{figure*}[t!] 
\centering
\begin{tikzpicture}
\begin{groupplot}[
group style={group size=2 by 1},
width=0.5\textwidth,
ymin=0,ymax=100,
xmin=0,xmax=120,
xlabel={Epoch},
ytick = {0, 25, 50, 75, 100}
]

\nextgroupplot [title={Standard Accuracy},
style={line width=0.5pt},
grid=both,
grid style={line width=.1pt, draw=gray!40},
ylabel=Accuracy,
y label style={at={(axis description cs:-0.1,.5)},anchor=south}]


\addplot[black, thick] table[x=Epoch, y=clean, col sep=comma,] {png/supp/cifar10_clean_acc.csv}; \label{normal}
\addplot[black, dashed, thick] table[x=Epoch, y=ia, col sep=comma,] {png/supp/cifar10_clean_acc.csv}; \label{normal_ia}

\addplot[red, thick] table[x=Epoch, y=at_om, col sep=comma,] {png/supp/cifar10_clean_acc.csv}; \label{AT}
\addplot[red, thick, dashed] table[x=Epoch, y=dmat_double, col sep=comma,] {png/supp/cifar10_clean_acc.csv}; \label{AT_ia}

\addplot[blue, thick] table[x=Epoch, y=dmat_om, col sep=comma,] {png/supp/cifar10_clean_acc.csv}; \label{DMAT}
\addplot[blue, thick, dashed] table[x=Epoch, y=dmat, col sep=comma,] {png/supp/cifar10_clean_acc.csv}; \label{DMAT_ia}

\addplot[green, thick] table[x=Epoch, y=trade, col sep=comma,] {png/supp/cifar10_clean_acc.csv}; \label{TRADE}
\addplot[green, thick, dashed] table[x=Epoch, y=trade_ia, col sep=comma,] {png/supp/cifar10_clean_acc.csv}; \label{TRADE_ia}

\addplot[brown, thick] table[x=Epoch, y=mart, col sep=comma,] {png/supp/cifar10_clean_acc.csv}; \label{MART}
\addplot[brown, thick, dashed] table[x=Epoch, y=mart_ia, col sep=comma,] {png/supp/cifar10_clean_acc.csv}; \label{MART_ia}



\coordinate (top) at (rel axis cs:0,1);

\nextgroupplot[title={Accuracy on PGD attack},
style={line width=0.5pt},
grid=both,
grid style={line width=.1pt, draw=gray!40}
]

\addplot[black, thick] table[x=Epoch, y=clean, col sep=comma,] {png/supp/cifar10_adv_acc.csv}; 
\addplot[black, dashed, thick] table[x=Epoch, y=ia, col sep=comma,] {png/supp/cifar10_adv_acc.csv}; 

\addplot[red, thick] table[x=Epoch, y=at_om, col sep=comma,] {png/supp/cifar10_adv_acc.csv}; 
\addplot[red, thick, dashed] table[x=Epoch, y=dmat_double, col sep=comma,] {png/supp/cifar10_adv_acc.csv}; 

\addplot[blue, thick] table[x=Epoch, y=dmat_om, col sep=comma,] {png/supp/cifar10_adv_acc.csv}; 
\addplot[blue, thick, dashed] table[x=Epoch, y=dmat, col sep=comma,] {png/supp/cifar10_adv_acc.csv}; 

\addplot[green, thick] table[x=Epoch, y=trade, col sep=comma,] {png/supp/cifar10_adv_acc.csv}; 
\addplot[green, thick, dashed] table[x=Epoch, y=trade_ia, col sep=comma,] {png/supp/cifar10_adv_acc.csv};

\addplot[brown, thick] table[x=Epoch, y=mart, col sep=comma,] {png/supp/cifar10_adv_acc.csv}; 
\addplot[brown, thick, dashed] table[x=Epoch, y=mart_ia, col sep=comma,] {png/supp/cifar10_adv_acc.csv}; 


\coordinate (bot) at (rel axis cs:1,0);
\end{groupplot}
legend
  \path (top|-current bounding box.south)--
        coordinate(legendpos)
        (bot|-current bounding box.south);
  \matrix[
      matrix of nodes,
      anchor=north,
      draw,
      inner sep=0.2em,
    ]at([yshift=-1ex]legendpos)
    { \ref{normal}& Normal&[5pt]
      \ref{normal_ia}& Normal + IA &[5pt] 
      \ref{AT}& AT &[5pt] 
      \ref{AT_ia}& AT + IA &[5pt] 
      \ref{DMAT}& DMAT &[5pt] \\
      \ref{DMAT_ia}& DMAT + IA &[5pt]
      \ref{TRADE}& TRADE &[5pt] 
      \ref{TRADE_ia}& TRADE + IA &[5pt]
      \ref{MART}& MART &[5pt]
      \ref{MART_ia}& MART + IA \\};
\end{tikzpicture}
\caption{Clean accuracy and PGD-50 robustness during training with CIFAR-10 dataset. Using IA during improves both standard accuracy and robustness.  Left: standard accuracy. Right: classification accuracy when the trained models are attacked by PGD attack.}
\label{fig:plot}
\end{figure*}
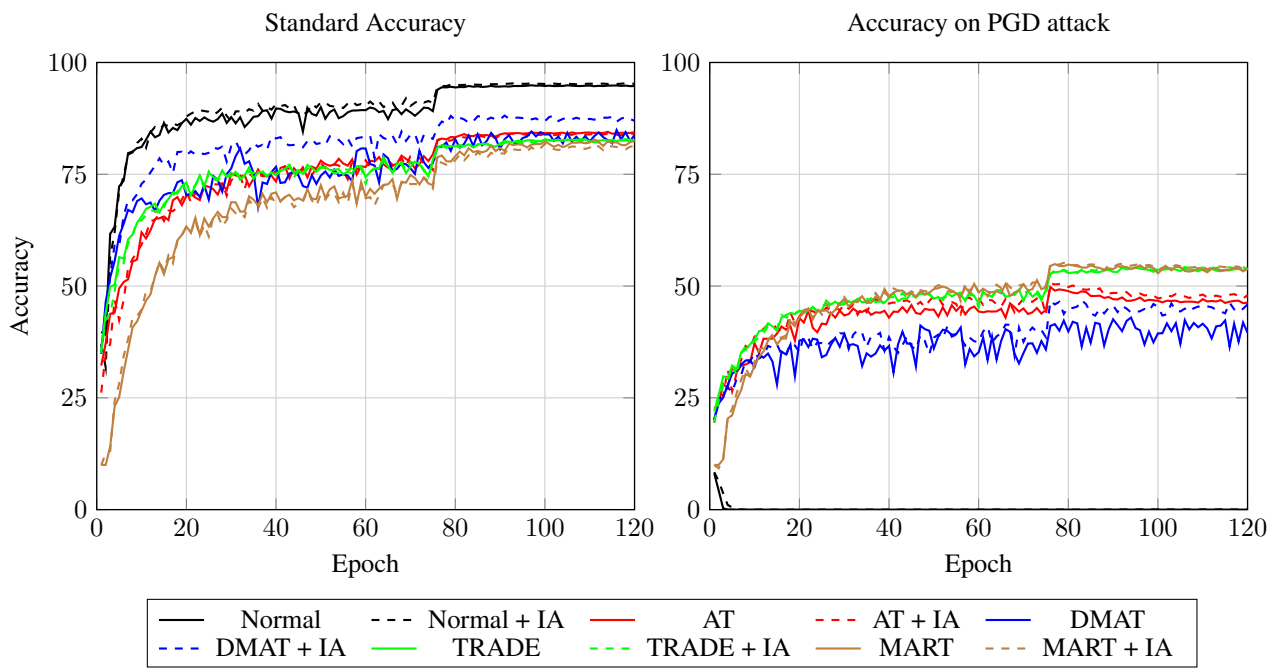

\section{JSA at different level of flow}
As we mentioned in Sec. \ref{sec:normalizing flows}, the Glow model we trained has multi-scale architecture. Mathematically, the latent vector is the following:

\begin{equation} \label{eq: composite}
    g^{-1}(x) = g_1^{-1} \circ g_2^{-1} \circ g_3^{-1} \circ g_4^{-1} (x) = (z_1, z_2, z_3, z_4) = z
\end{equation}
\noindent where $g$ is the flow-based model which consists of 4 levels. We perturb the whole latent vector $z$ when we craft an adversarial sample from JSA. Instead of attacking the whole latent vector, we would like to visualize the adversarial samples when only one level of the latent vector is attacked. Mathematically, 
\begin{equation}
    \max_{\lambda_i \in \Lambda} \cL (f_{\theta}(g(z_1, \dots, z_i + \lambda_i, z_4)), y_{\text{true}}),
    \label{eq:on-manifold_supp}
\end{equation}
\noindent for some flow level $i$.  The visualization results are shown in Fig. \ref{fig:layer}. We can observe that the perturbation pattern changes from coarse to fine in semantic details. In other words, the perturbation pattern is similar to PGD or Gabor attack when it is at level 1 and the perturbation becomes more similar to the subject (leopard in this example) when it is attacked at a higher level. Therefore, when the whole latent vector is attacked, all these perturbations will be accumulated and make This somehow explains why on-manifold adversarial samples from JSA could improve the robustness of various attacks. 

\begin{figure*}[t!]
    \centering
    \captionsetup[subfigure]{justification=centering}
 
    \begin{subfigure}[t]{0.16\textwidth}
        \raisebox{-\height}{\includegraphics[width=0.99\textwidth]{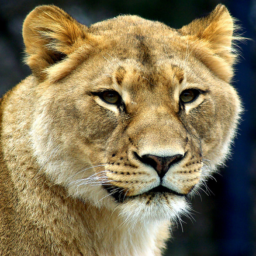}}\\
        \raisebox{-\height}{\includegraphics[width=0.99\textwidth]{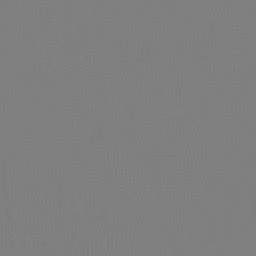}}
        \caption*{IA ($\eta = 0.02$) at level 1}
    \end{subfigure}
    \begin{subfigure}[t]{0.16\textwidth}
        \raisebox{-\height}{\includegraphics[width=0.99\textwidth]{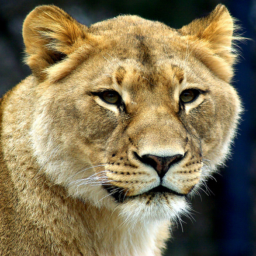}}\\
        \raisebox{-\height}{\includegraphics[width=0.99\textwidth]{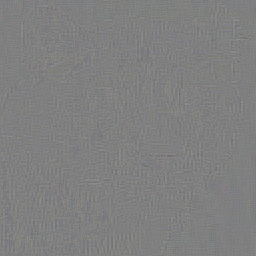}}
        \caption*{IA ($\eta = 0.04$) at level 1}
    \end{subfigure}
    \begin{subfigure}[t]{0.16\textwidth}
        \raisebox{-\height}{\includegraphics[width=0.99\textwidth]{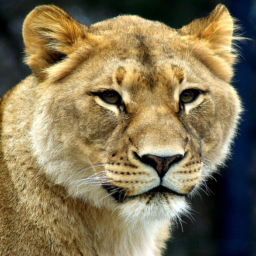}}\\
        \raisebox{-\height}{\includegraphics[width=0.99\textwidth]{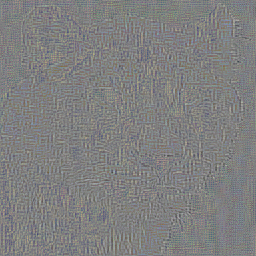}}
        \caption*{IA ($\eta = 0.1$) at level 1}
    \end{subfigure}
    \begin{subfigure}[t]{0.16\textwidth}
        \raisebox{-\height}{\includegraphics[width=0.99\textwidth]{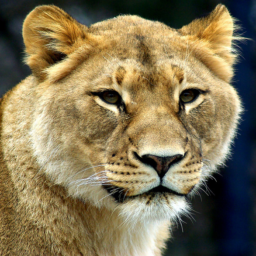}}\\
        \raisebox{-\height}{\includegraphics[width=0.99\textwidth]{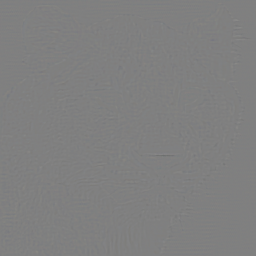}}
        \caption*{IA ($\eta = 0.02$) at level 2}
    \end{subfigure}
    \begin{subfigure}[t]{0.16\textwidth}
        \raisebox{-\height}{\includegraphics[width=0.99\textwidth]{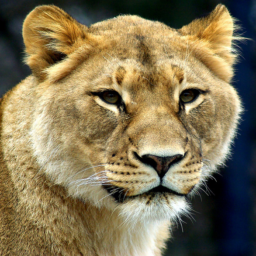}}\\
        \raisebox{-\height}{\includegraphics[width=0.99\textwidth]{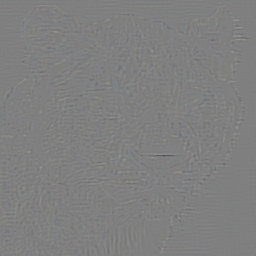}}
        \caption*{IA ($\eta = 0.04$) at level 2}
    \end{subfigure}
    \begin{subfigure}[t]{0.16\textwidth}
        \raisebox{-\height}{\includegraphics[width=0.99\textwidth]{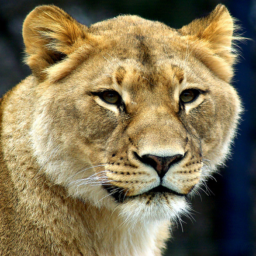}}\\
        \raisebox{-\height}{\includegraphics[width=0.99\textwidth]{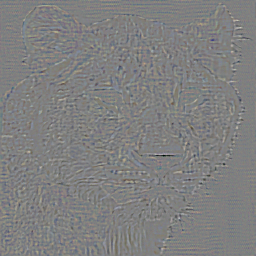}}
        \caption*{IA ($\eta = 0.1$) at level 2}
    \end{subfigure}
    \\
    \begin{subfigure}[t]{0.16\textwidth}
        \raisebox{-\height}{\includegraphics[width=0.99\textwidth]{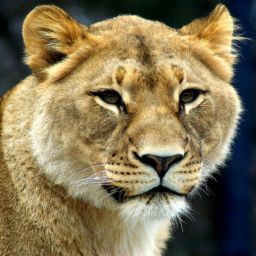}}\\
        \raisebox{-\height}{\includegraphics[width=0.99\textwidth]{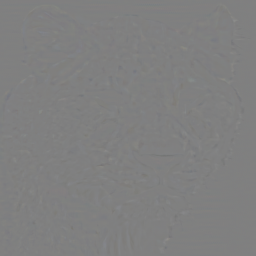}}
        \caption*{IA ($\eta = 0.02$) at level 3}
    \end{subfigure}
    \begin{subfigure}[t]{0.16\textwidth}
        \raisebox{-\height}{\includegraphics[width=0.99\textwidth]{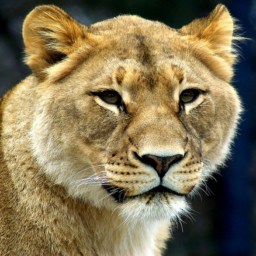}}\\
        \raisebox{-\height}{\includegraphics[width=0.99\textwidth]{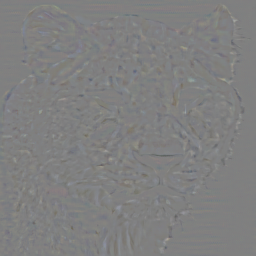}}
        \caption*{IA ($\eta = 0.04$) at level 3}
    \end{subfigure}
    \begin{subfigure}[t]{0.16\textwidth}
        \raisebox{-\height}{\includegraphics[width=0.99\textwidth]{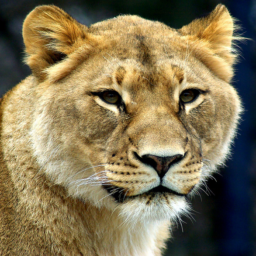}}\\
        \raisebox{-\height}{\includegraphics[width=0.99\textwidth]{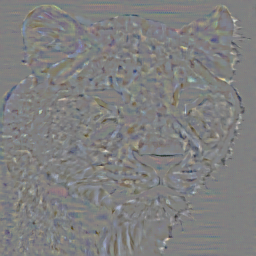}}
        \caption*{IA ($\eta = 0.1$) at level 3}
    \end{subfigure}
    \begin{subfigure}[t]{0.16\textwidth}
        \raisebox{-\height}{\includegraphics[width=0.99\textwidth]{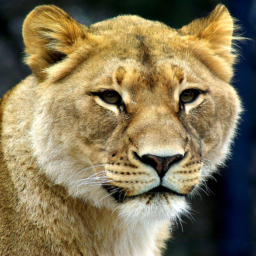}}\\
        \raisebox{-\height}{\includegraphics[width=0.99\textwidth]{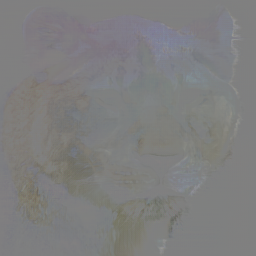}}
        \caption*{IA ($\eta = 0.02$) at level 4}
    \end{subfigure}
    \begin{subfigure}[t]{0.16\textwidth}
        \raisebox{-\height}{\includegraphics[width=0.99\textwidth]{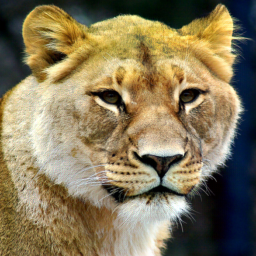}}\\
        \raisebox{-\height}{\includegraphics[width=0.99\textwidth]{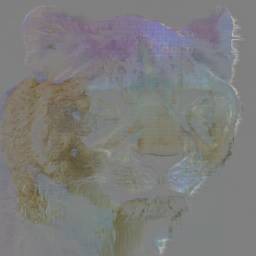}}
        \caption*{IA ($\eta = 0.04$) at level 4}
    \end{subfigure}
    \begin{subfigure}[t]{0.16\textwidth}
        \raisebox{-\height}{\includegraphics[width=0.99\textwidth]{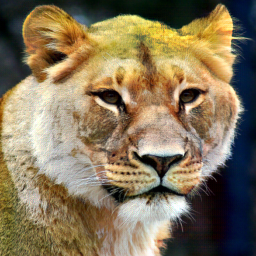}}\\
        \raisebox{-\height}{\includegraphics[width=0.99\textwidth]{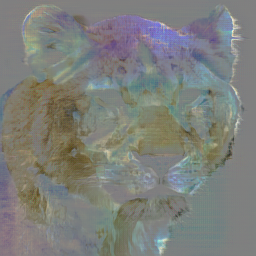}}
        \caption*{IA ($\eta = 0.1$) at level 4}
    \end{subfigure}

    \caption{Visualization of adversarial samples from IA with various perturbation budgets at different flow levels. The differences are magnified for visualization purpose.} \label{fig:layer}
\end{figure*}

\section{Class Activation Map of JSTM}
To understand how JSA samples improves standard accuracy and robustness, we can use a class activation map (CAM) to obtain a good visual explanation. In this paper, we use GRAD-CAM \cite{selvaraju2017grad} to visualize. The CAM results are shown in Fig. \ref{fig:cam_supp}. For the model with normal training, we can see the CAM is sparse on the semantic region even for the original image. With JSA, the CAM is more concentrated on the semantic region. For PGD, JSA, and Elastic attacks to model with normal training, these attacks break the classifier and make the CAM wrong. For normal training with JSA, it successfully defends JSA and Elastic attack and has a meaningful CAM while it is broken by PGD attack. For the existing adversarial training method, using JSA could have CAM more concentrated on the semantic regions (See the row AT and TRADE). This shows that using IA during training can help the classifier to concentrate more on the semantic regions than not using IA even both models classify correctly. 

\begin{figure*}[t!]
    \centering
    \captionsetup[subfigure]{justification=centering}
    \begin{subfigure}[t]{0.22\textwidth}
        \raisebox{-6ex}{\rotatebox[origin=c]{90}{\text{Normal}}}
    \end{subfigure}
    \hspace{-3.3cm}
    \begin{subfigure}[t]{0.24\textwidth}
        \raisebox{-\height}{\includegraphics[width=0.48\textwidth]{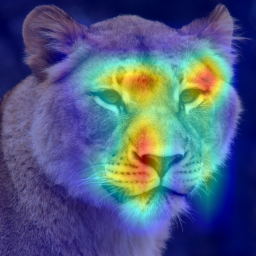}} \hspace{-2mm}
        \raisebox{-\height}{\includegraphics[width=0.48\textwidth]{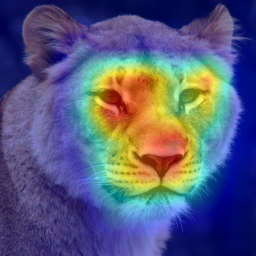}} 
    \end{subfigure}
    \hspace{-2mm}
    \begin{subfigure}[t]{0.24\textwidth}
        \raisebox{-\height}{\includegraphics[width=0.48\textwidth]{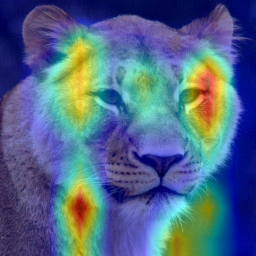}} \hspace{-2mm}
        \raisebox{-\height}{\includegraphics[width=0.48\textwidth]{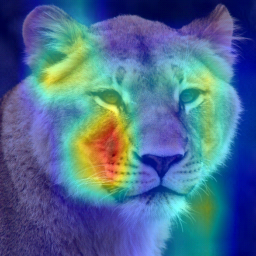}} 
    \end{subfigure}
    \hspace{-2mm}
    \begin{subfigure}[t]{0.24\textwidth}
        \raisebox{-\height}{\includegraphics[width=0.48\textwidth]{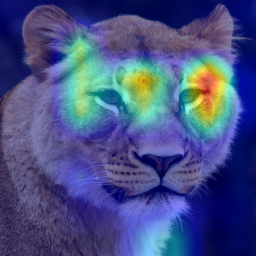}} \hspace{-2mm}
        \raisebox{-\height}{\includegraphics[width=0.48\textwidth]{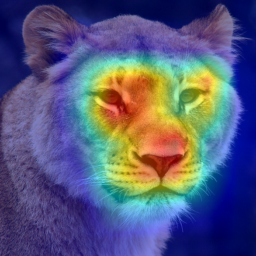}} 
    \end{subfigure}
    \hspace{-2mm}
    \begin{subfigure}[t]{0.24\textwidth}
        \raisebox{-\height}{\includegraphics[width=0.48\textwidth]{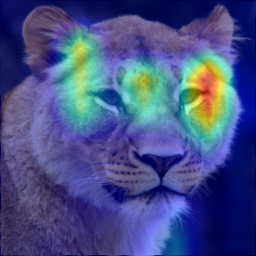}} \hspace{-2mm}
        \raisebox{-\height}{\includegraphics[width=0.48\textwidth]{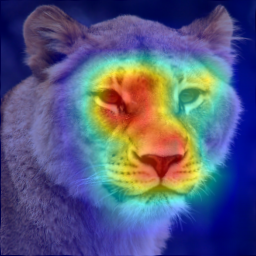}} 
    \end{subfigure}
    \\
    \begin{subfigure}[t]{0.22\textwidth}
        \raisebox{-6ex}{\rotatebox[origin=c]{90}{\text{AT}}}
    \end{subfigure}
    \hspace{-3.3cm}
    \begin{subfigure}[t]{0.24\textwidth}
        \raisebox{-\height}{\includegraphics[width=0.48\textwidth]{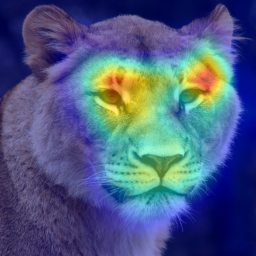}} \hspace{-2mm}
        \raisebox{-\height}{\includegraphics[width=0.48\textwidth]{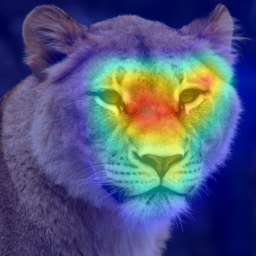}} 
    \end{subfigure}
    \hspace{-2mm}
    \begin{subfigure}[t]{0.24\textwidth}
        \raisebox{-\height}{\includegraphics[width=0.48\textwidth]{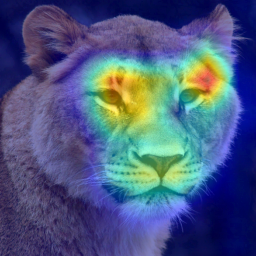}} \hspace{-2mm}
        \raisebox{-\height}{\includegraphics[width=0.48\textwidth]{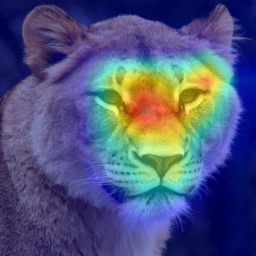}} 
    \end{subfigure}
    \hspace{-2mm}
    \begin{subfigure}[t]{0.24\textwidth}
        \raisebox{-\height}{\includegraphics[width=0.48\textwidth]{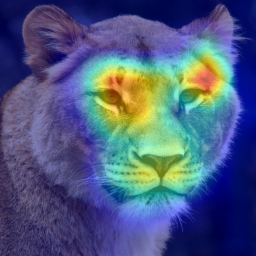}} \hspace{-2mm}
        \raisebox{-\height}{\includegraphics[width=0.48\textwidth]{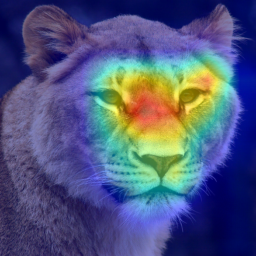}} 
    \end{subfigure}
    \hspace{-2mm}
    \begin{subfigure}[t]{0.24\textwidth}
        \raisebox{-\height}{\includegraphics[width=0.48\textwidth]{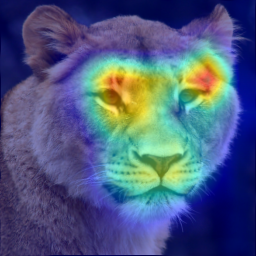}} \hspace{-2mm}
        \raisebox{-\height}{\includegraphics[width=0.48\textwidth]{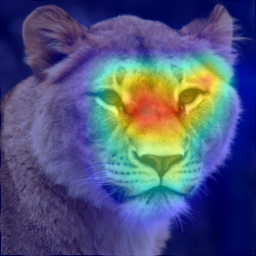}} 
    \end{subfigure}
    \\
    \begin{subfigure}[t]{0.22\textwidth}
        \raisebox{-6ex}{\rotatebox[origin=c]{90}{\text{DMAT}}}
    \end{subfigure}
    \hspace{-3.3cm}
    \begin{subfigure}[t]{0.24\textwidth}
        \raisebox{-\height}{\includegraphics[width=0.48\textwidth]{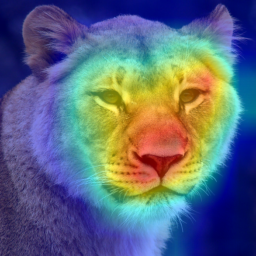}} \hspace{-2mm}
        \raisebox{-\height}{\includegraphics[width=0.48\textwidth]{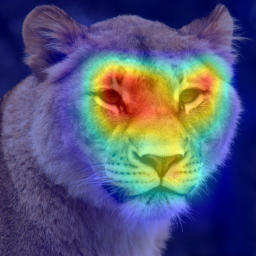}} 
    \end{subfigure}
    \hspace{-2mm}
    \begin{subfigure}[t]{0.24\textwidth}
        \raisebox{-\height}{\includegraphics[width=0.48\textwidth]{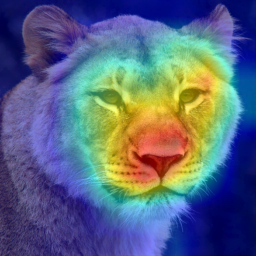}} \hspace{-2mm}
        \raisebox{-\height}{\includegraphics[width=0.48\textwidth]{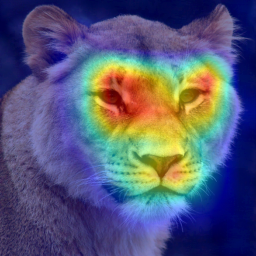}} 
    \end{subfigure}
    \hspace{-2mm}
    \begin{subfigure}[t]{0.24\textwidth}
        \raisebox{-\height}{\includegraphics[width=0.48\textwidth]{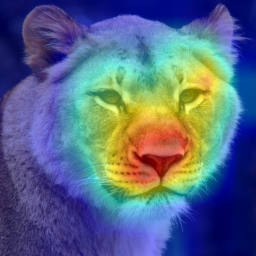}} \hspace{-2mm}
        \raisebox{-\height}{\includegraphics[width=0.48\textwidth]{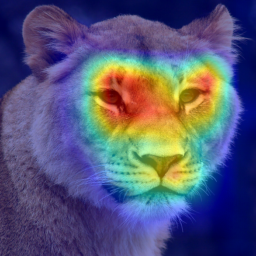}} 
    \end{subfigure}
    \hspace{-2mm}
    \begin{subfigure}[t]{0.24\textwidth}
        \raisebox{-\height}{\includegraphics[width=0.48\textwidth]{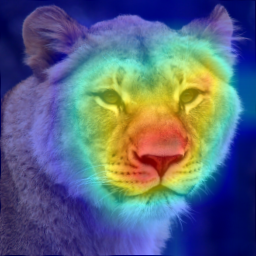}} \hspace{-2mm}
        \raisebox{-\height}{\includegraphics[width=0.48\textwidth]{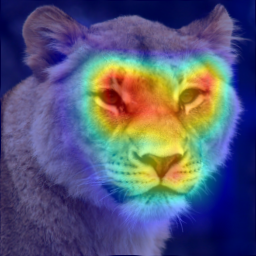}} 
    \end{subfigure}
    \\
    \begin{subfigure}[t]{0.22\textwidth}
        \raisebox{-6ex}{\rotatebox[origin=c]{90}{\text{TRADE}}}
    \end{subfigure}
    \hspace{-3.3cm}
    \begin{subfigure}[t]{0.24\textwidth}
        \raisebox{-\height}{\includegraphics[width=0.48\textwidth]{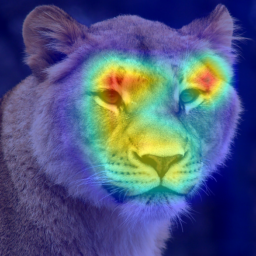}} \hspace{-2mm}
        \raisebox{-\height}{\includegraphics[width=0.48\textwidth]{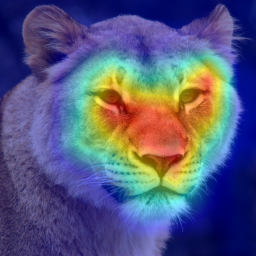}} 
    \end{subfigure}
    \hspace{-2mm}
    \begin{subfigure}[t]{0.24\textwidth}
        \raisebox{-\height}{\includegraphics[width=0.48\textwidth]{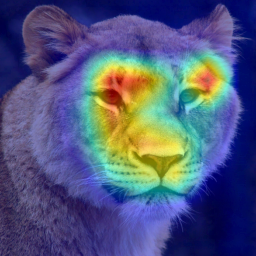}} \hspace{-2mm}
        \raisebox{-\height}{\includegraphics[width=0.48\textwidth]{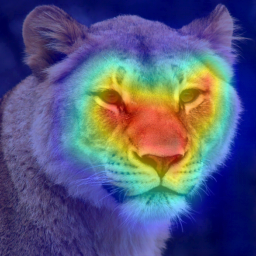}} 
    \end{subfigure}
    \hspace{-2mm}
    \begin{subfigure}[t]{0.24\textwidth}
        \raisebox{-\height}{\includegraphics[width=0.48\textwidth]{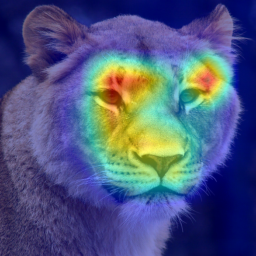}} \hspace{-2mm}
        \raisebox{-\height}{\includegraphics[width=0.48\textwidth]{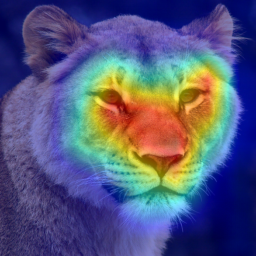}} 
    \end{subfigure}
    \hspace{-2mm}
    \begin{subfigure}[t]{0.24\textwidth}
        \raisebox{-\height}{\includegraphics[width=0.48\textwidth]{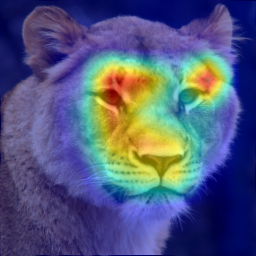}} \hspace{-2mm}
        \raisebox{-\height}{\includegraphics[width=0.48\textwidth]{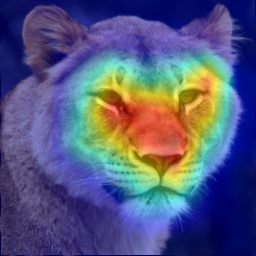}} 
    \end{subfigure}
    \\
    \begin{subfigure}[t]{0.22\textwidth}
        \raisebox{-6ex}{\rotatebox[origin=c]{90}{\text{MART}}}
    \end{subfigure}
    \hspace{-3.3cm}
    \begin{subfigure}[t]{0.24\textwidth}
        \raisebox{-\height}{\includegraphics[width=0.48\textwidth]{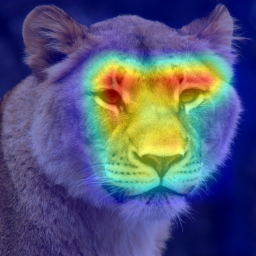}} \hspace{-2mm}
        \raisebox{-\height}{\includegraphics[width=0.48\textwidth]{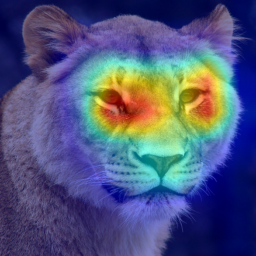}} 
        \caption*{Original}
    \end{subfigure}
    \hspace{-2mm}
    \begin{subfigure}[t]{0.24\textwidth}
        \raisebox{-\height}{\includegraphics[width=0.48\textwidth]{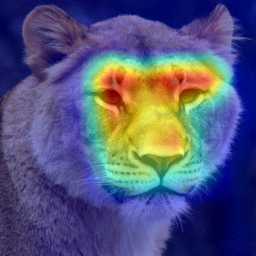}} \hspace{-2mm}
        \raisebox{-\height}{\includegraphics[width=0.48\textwidth]{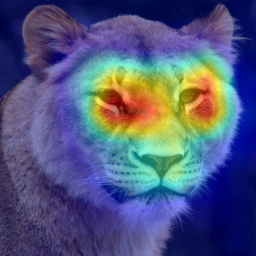}} 
        \caption*{PGD}
    \end{subfigure}
    \hspace{-2mm}
    \begin{subfigure}[t]{0.24\textwidth}
        \raisebox{-\height}{\includegraphics[width=0.48\textwidth]{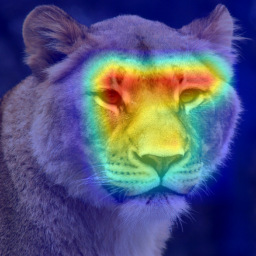}} \hspace{-2mm}
        \raisebox{-\height}{\includegraphics[width=0.48\textwidth]{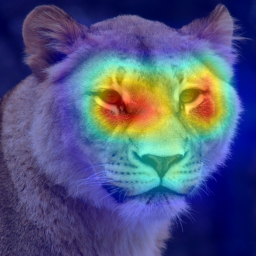}} 
        \caption*{IA}
    \end{subfigure}
    \hspace{-2mm}
    \begin{subfigure}[t]{0.24\textwidth}
        \raisebox{-\height}{\includegraphics[width=0.48\textwidth]{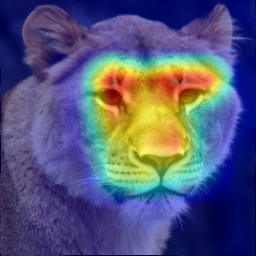}} \hspace{-2mm}
        \raisebox{-\height}{\includegraphics[width=0.48\textwidth]{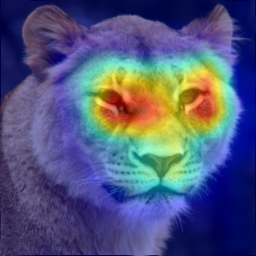}} 
        \caption*{Elastic}
    \end{subfigure}

    \caption{Visualization of CAM for different methods with and without IA augmentation. Each row represents a defense model and each column represents an attack. For each combination of attack and defense method, there are two images. The left one is the original method and the right one is that combined with IA. The heap map has a higher confidence score and overlap more with the semantic meaningful region (the face of teh leopard) when IA augmentation is used.} \label{fig:cam_supp}
\end{figure*}

\section{Additional Figures for JSA Images}
\begin{figure*}[t]
\setlength\tabcolsep{0pt}
\renewcommand{\arraystretch}{0}
\scalebox{1}{\begin{tabular}{cccccccc}
\begin{subfigure}[t]{0.123\textwidth}
\includegraphics[width=\textwidth]{png/jsa/0_iter_0_inputs_0.png}
\end{subfigure} &
\begin{subfigure}[t]{0.123\textwidth}
\includegraphics[width=\textwidth]{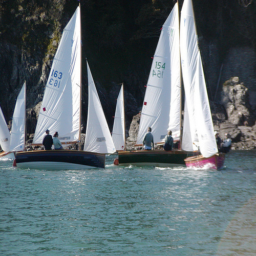}
\end{subfigure} &
\begin{subfigure}[t]{0.123\textwidth}
\includegraphics[width=\textwidth]{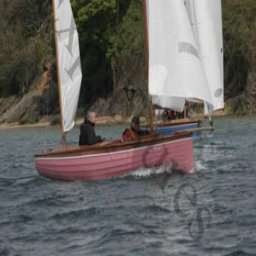}
\end{subfigure} &
\begin{subfigure}[t]{0.123\textwidth}
\includegraphics[width=\textwidth]{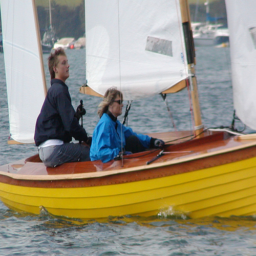}
\end{subfigure} &
\begin{subfigure}[t]{0.123\textwidth}
\includegraphics[width=\textwidth]{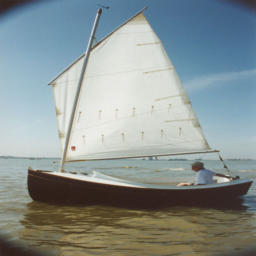}
\end{subfigure} &
\begin{subfigure}[t]{0.123\textwidth}
\includegraphics[width=\textwidth]{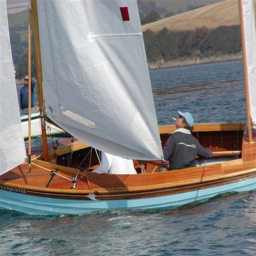}
\end{subfigure} &
\begin{subfigure}[t]{0.123\textwidth}
\includegraphics[width=\textwidth]{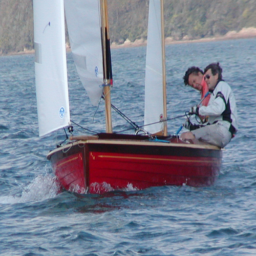}
\end{subfigure} &
\hspace{-2mm}
\begin{subfigure}[t]{0.123\textwidth}
\includegraphics[width=\textwidth]{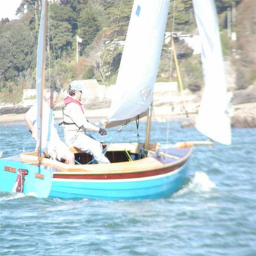}
\end{subfigure}
\\
\begin{subfigure}[t]{0.123\textwidth}
\includegraphics[width=\textwidth]{png/jsa/0_iter_0_inputs_0_jsa.png}
\end{subfigure} &
\begin{subfigure}[t]{0.123\textwidth}
\includegraphics[width=\textwidth]{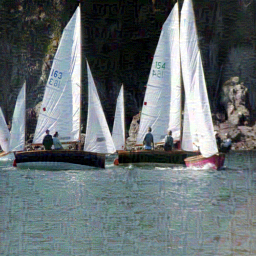}
\end{subfigure} &
\begin{subfigure}[t]{0.123\textwidth}
\includegraphics[width=\textwidth]{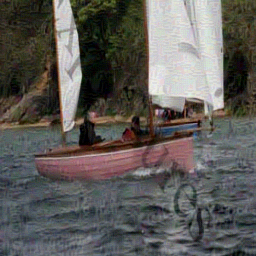}
\end{subfigure} &
\begin{subfigure}[t]{0.123\textwidth}
\includegraphics[width=\textwidth]{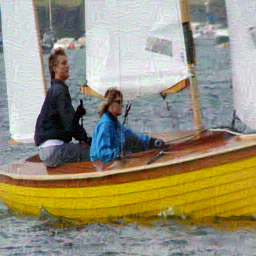}
\end{subfigure} &
\begin{subfigure}[t]{0.123\textwidth}
\includegraphics[width=\textwidth]{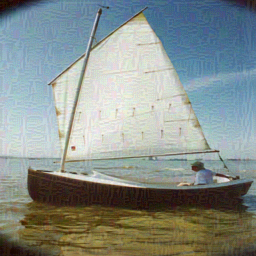}
\end{subfigure} &
\begin{subfigure}[t]{0.123\textwidth}
\includegraphics[width=\textwidth]{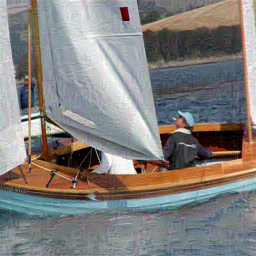}
\end{subfigure} &
\begin{subfigure}[t]{0.123\textwidth}
\includegraphics[width=\textwidth]{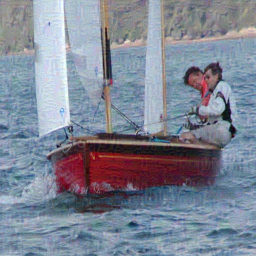}
\end{subfigure} &
\hspace{-2mm}
\begin{subfigure}[t]{0.123\textwidth}
\includegraphics[width=\textwidth]{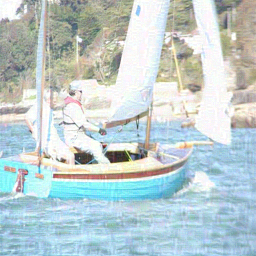}
\end{subfigure}
\\
\begin{subfigure}[t]{0.123\textwidth}
\includegraphics[width=\textwidth]{png/jsa/0_iter_0_inputs_0_jsa_diff.png}
\end{subfigure} &
\begin{subfigure}[t]{0.123\textwidth}
\includegraphics[width=\textwidth]{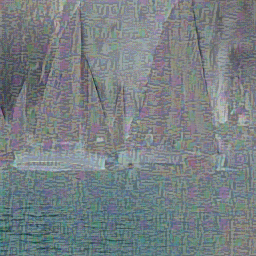}
\end{subfigure} &
\begin{subfigure}[t]{0.123\textwidth}
\includegraphics[width=\textwidth]{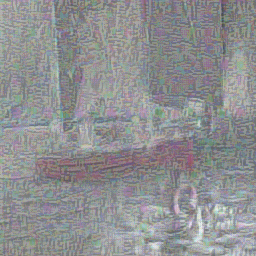}
\end{subfigure} &
\begin{subfigure}[t]{0.123\textwidth}
\includegraphics[width=\textwidth]{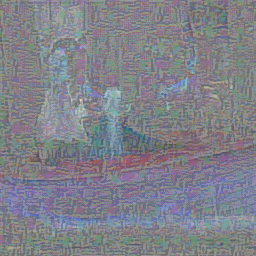}
\end{subfigure} &
\begin{subfigure}[t]{0.123\textwidth}
\includegraphics[width=\textwidth]{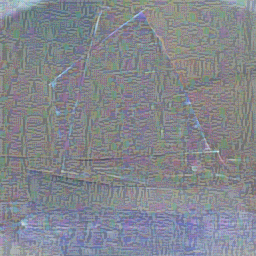}
\end{subfigure} &
\begin{subfigure}[t]{0.123\textwidth}
\includegraphics[width=\textwidth]{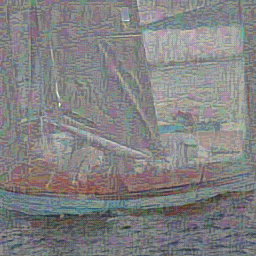}
\end{subfigure} &
\begin{subfigure}[t]{0.123\textwidth}
\includegraphics[width=\textwidth]{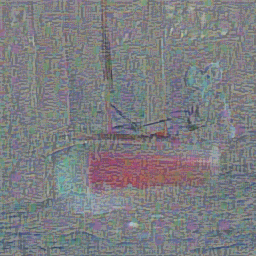}
\end{subfigure} &
\begin{subfigure}[t]{0.123\textwidth}
\includegraphics[width=\textwidth]{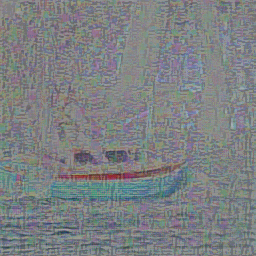}
\end{subfigure}
\vspace{2mm}
\\
\begin{subfigure}[t]{0.123\textwidth}
\includegraphics[width=\textwidth]{png/jsa/0_iter_100_inputs_0.png}
\end{subfigure} &
\begin{subfigure}[t]{0.123\textwidth}
\includegraphics[width=\textwidth]{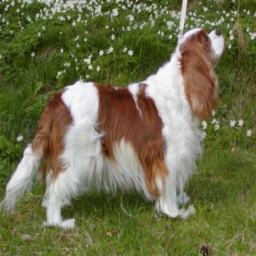}
\end{subfigure} &
\begin{subfigure}[t]{0.123\textwidth}
\includegraphics[width=\textwidth]{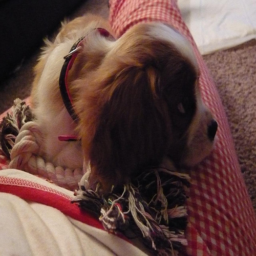}
\end{subfigure} &
\begin{subfigure}[t]{0.123\textwidth}
\includegraphics[width=\textwidth]{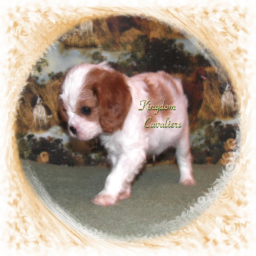}
\end{subfigure} &
\begin{subfigure}[t]{0.123\textwidth}
\includegraphics[width=\textwidth]{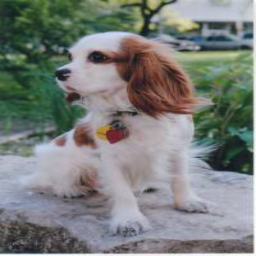}
\end{subfigure} &
\begin{subfigure}[t]{0.123\textwidth}
\includegraphics[width=\textwidth]{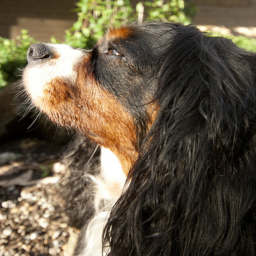}
\end{subfigure} &
\begin{subfigure}[t]{0.123\textwidth}
\includegraphics[width=\textwidth]{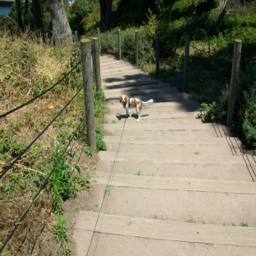}
\end{subfigure} &
\hspace{-2mm}
\begin{subfigure}[t]{0.123\textwidth}
\includegraphics[width=\textwidth]{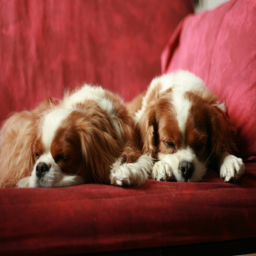}
\end{subfigure}
\\
\begin{subfigure}[t]{0.123\textwidth}
\includegraphics[width=\textwidth]{png/jsa/0_iter_100_inputs_0_jsa.png}
\end{subfigure} &
\begin{subfigure}[t]{0.123\textwidth}
\includegraphics[width=\textwidth]{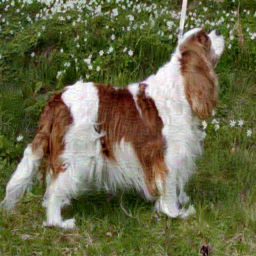}
\end{subfigure} &
\begin{subfigure}[t]{0.123\textwidth}
\includegraphics[width=\textwidth]{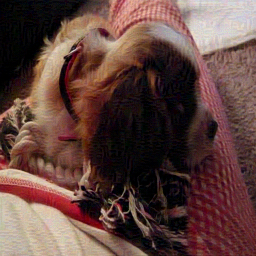}
\end{subfigure} &
\begin{subfigure}[t]{0.123\textwidth}
\includegraphics[width=\textwidth]{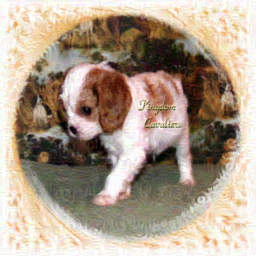}
\end{subfigure} &
\begin{subfigure}[t]{0.123\textwidth}
\includegraphics[width=\textwidth]{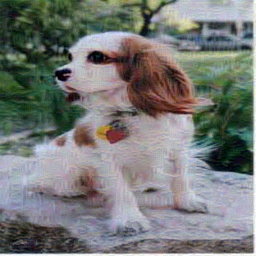}
\end{subfigure} &
\begin{subfigure}[t]{0.123\textwidth}
\includegraphics[width=\textwidth]{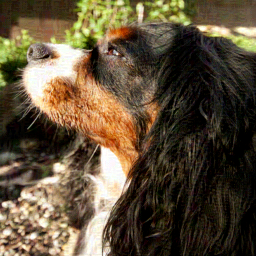}
\end{subfigure} &
\begin{subfigure}[t]{0.123\textwidth}
\includegraphics[width=\textwidth]{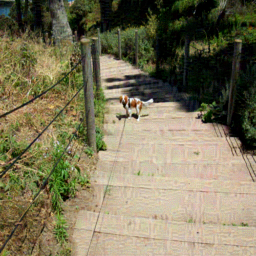}
\end{subfigure} &
\hspace{-2mm}
\begin{subfigure}[t]{0.123\textwidth}
\includegraphics[width=\textwidth]{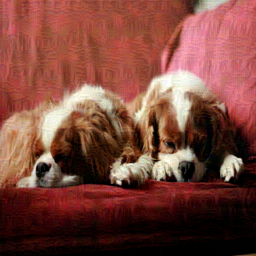}
\end{subfigure}
\\
\begin{subfigure}[t]{0.123\textwidth}
\includegraphics[width=\textwidth]{png/jsa/0_iter_100_inputs_0_jsa_diff.png}
\end{subfigure} &
\begin{subfigure}[t]{0.123\textwidth}
\includegraphics[width=\textwidth]{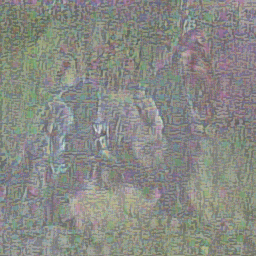}
\end{subfigure} &
\begin{subfigure}[t]{0.123\textwidth}
\includegraphics[width=\textwidth]{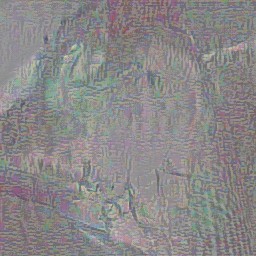}
\end{subfigure} &
\begin{subfigure}[t]{0.123\textwidth}
\includegraphics[width=\textwidth]{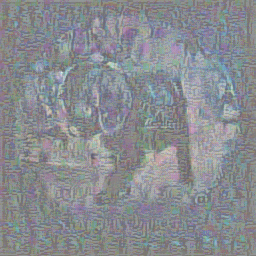}
\end{subfigure} &
\begin{subfigure}[t]{0.123\textwidth}
\includegraphics[width=\textwidth]{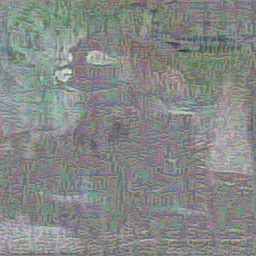}
\end{subfigure} &
\begin{subfigure}[t]{0.123\textwidth}
\includegraphics[width=\textwidth]{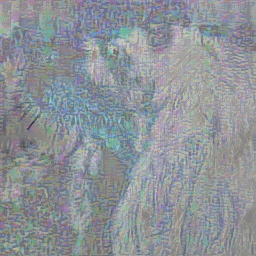}
\end{subfigure} &
\begin{subfigure}[t]{0.123\textwidth}
\includegraphics[width=\textwidth]{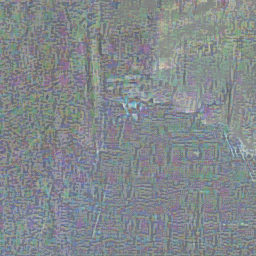}
\end{subfigure} &
\begin{subfigure}[t]{0.123\textwidth}
\includegraphics[width=\textwidth]{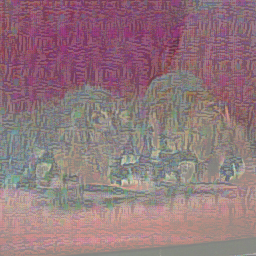}
\end{subfigure}
\vspace{2mm}
\\
\begin{subfigure}[t]{0.123\textwidth}
\includegraphics[width=\textwidth]{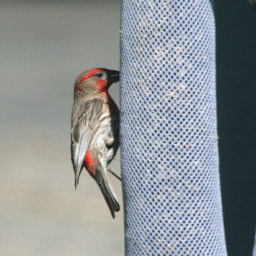}
\end{subfigure} &
\begin{subfigure}[t]{0.123\textwidth}
\includegraphics[width=\textwidth]{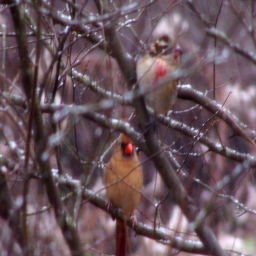}
\end{subfigure} &
\begin{subfigure}[t]{0.123\textwidth}
\includegraphics[width=\textwidth]{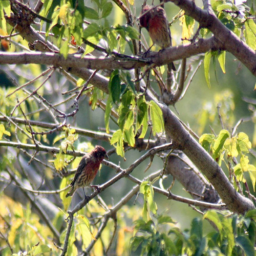}
\end{subfigure} &
\begin{subfigure}[t]{0.123\textwidth}
\includegraphics[width=\textwidth]{png/jsa/0_iter_200_inputs_3.png}
\end{subfigure} &
\begin{subfigure}[t]{0.123\textwidth}
\includegraphics[width=\textwidth]{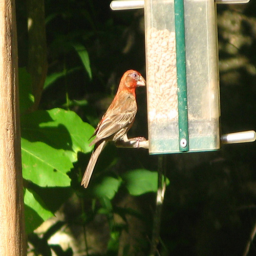}
\end{subfigure} &
\begin{subfigure}[t]{0.123\textwidth}
\includegraphics[width=\textwidth]{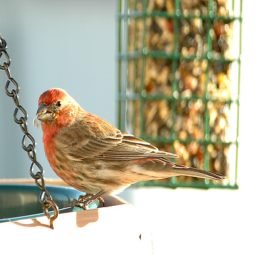}
\end{subfigure} &
\begin{subfigure}[t]{0.123\textwidth}
\includegraphics[width=\textwidth]{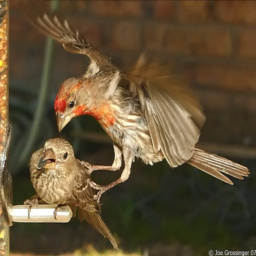}
\end{subfigure} &
\hspace{-2mm}
\begin{subfigure}[t]{0.123\textwidth}
\includegraphics[width=\textwidth]{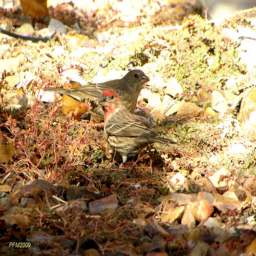}
\end{subfigure}
\\
\begin{subfigure}[t]{0.123\textwidth}
\includegraphics[width=\textwidth]{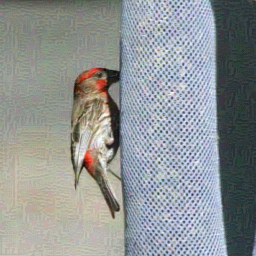}
\end{subfigure} &
\begin{subfigure}[t]{0.123\textwidth}
\includegraphics[width=\textwidth]{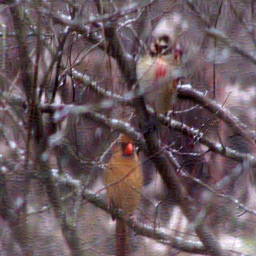}
\end{subfigure} &
\begin{subfigure}[t]{0.123\textwidth}
\includegraphics[width=\textwidth]{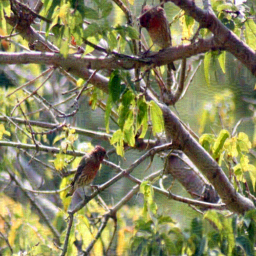}
\end{subfigure} &
\begin{subfigure}[t]{0.123\textwidth}
\includegraphics[width=\textwidth]{png/jsa/0_iter_200_inputs_3_jsa.png}
\end{subfigure} &
\begin{subfigure}[t]{0.123\textwidth}
\includegraphics[width=\textwidth]{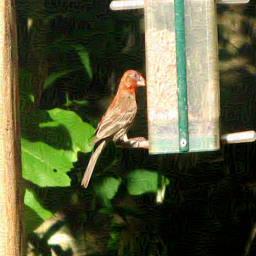}
\end{subfigure} &
\begin{subfigure}[t]{0.123\textwidth}
\includegraphics[width=\textwidth]{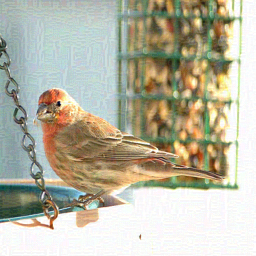}
\end{subfigure} &
\begin{subfigure}[t]{0.123\textwidth}
\includegraphics[width=\textwidth]{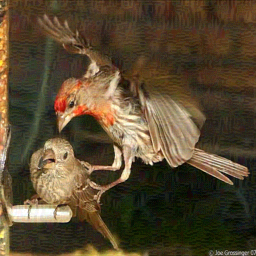}
\end{subfigure} &
\hspace{-2mm}
\begin{subfigure}[t]{0.123\textwidth}
\includegraphics[width=\textwidth]{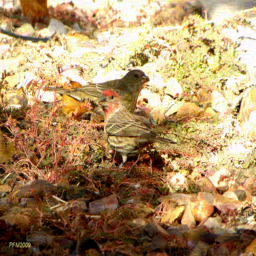}
\end{subfigure}
\\
\begin{subfigure}[t]{0.123\textwidth}
\includegraphics[width=\textwidth]{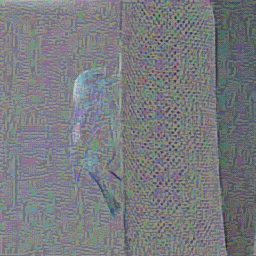}
\end{subfigure} &
\begin{subfigure}[t]{0.123\textwidth}
\includegraphics[width=\textwidth]{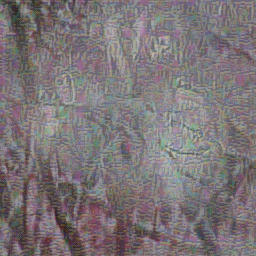}
\end{subfigure} &
\begin{subfigure}[t]{0.123\textwidth}
\includegraphics[width=\textwidth]{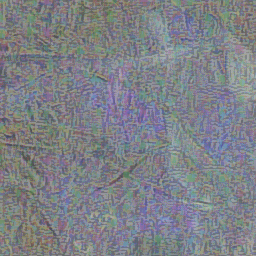}
\end{subfigure} &
\begin{subfigure}[t]{0.123\textwidth}
\includegraphics[width=\textwidth]{png/jsa/0_iter_200_inputs_3_jsa_diff.png}
\end{subfigure} &
\begin{subfigure}[t]{0.123\textwidth}
\includegraphics[width=\textwidth]{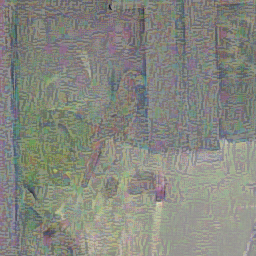}
\end{subfigure} &
\begin{subfigure}[t]{0.123\textwidth}
\includegraphics[width=\textwidth]{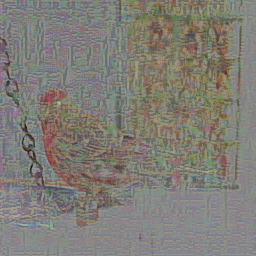}
\end{subfigure} &
\begin{subfigure}[t]{0.123\textwidth}
\includegraphics[width=\textwidth]{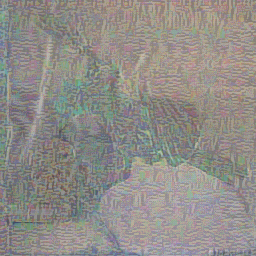}
\end{subfigure} &
\hspace{-2mm}
\begin{subfigure}[t]{0.123\textwidth}
\includegraphics[width=\textwidth]{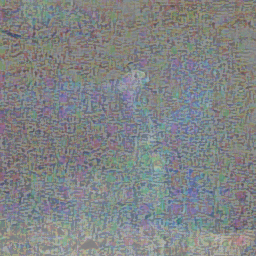}
\end{subfigure}
\end{tabular}}
\caption{Visualization of adversarial samples from JSA. Top: Original. Middle: JSA. Bottom: Magnified difference.}
\label{fig: JSA attack samples supp}
\end{figure*}

\end{document}